\DeclareMathOperator*{\argmin}{argmin}
\newtheorem{theorem}{Theorem}
\newtheorem{corollary}{Corollary}
\newtheorem{definition}{Definition}
\newtheorem{proposition}{Proposition}
\title{NeurWIN: Neural Whittle Index Network For Restless Bandits Via Deep RL}
\author{
  Khaled Nakhleh$^{1}$, Santosh Ganji$^{1}$, Ping-Chun Hsieh$^{2}$, I-Hong Hou$^{1}$, Srinivas Shakkottai$^{1}$\\
  \\ 
  $\mathbf{^{1}}$ Electrical and Computer Engineering Department\\
  Texas A\&M University\\
  College Station, TX \\
  \texttt{\{khaled.jamal, sant1, ihou, sshakkot\}@tamu.edu}\\
  \\
  $\mathbf{^{2}}$ Department of Computer Science \\
  National Chiao Tung University, Taiwan \\
  \texttt{pinghsieh@nctu.edu.tw}
  
}
\begin{document}

\maketitle

\begin{abstract}
Whittle index policy is a powerful tool to obtain asymptotically optimal solutions for the notoriously intractable problem of restless bandits. However, finding the Whittle indices remains a difficult problem for many practical restless bandits with convoluted transition kernels. This paper proposes NeurWIN, a neural Whittle index network that seeks to learn the Whittle indices for any restless bandits by leveraging mathematical properties of the Whittle indices. We show that a neural network that produces the Whittle index is also one that produces the optimal control for a set of Markov decision problems. This property motivates using deep reinforcement learning for the training of NeurWIN. We demonstrate the utility of NeurWIN by evaluating its performance for three recently studied restless bandit problems.
Our experiment results show that the performance of NeurWIN is significantly better than other RL algorithms.
\end{abstract}
\section{Introduction} \label{section:intro}

Many sequential decision problems can be modeled as multi-armed bandit problems. A bandit problem models each potential decision as an arm. In each round, we play $M$ arms out of a total of $N$ arms by choosing the corresponding decisions. We then receive a reward from the played arms. The goal is to maximize the expected long-term total discounted reward.
Consider, for example, displaying advertisements on an online platform with the goal to maximize the long-term discounted click-through rates. This can be modeled as a bandit problem where each arm is a piece of advertisement and we choose which advertisements to be displayed every time a particular user visits the platform. It should be noted that the reward, i.e., click-through rate, of an arm is not stationary, but depends on our actions in the past. For example, a user that just clicked on a particular advertisement may be much less likely to click on the same advertisement in the near future. Such a problem is a classic case of the \emph{restless bandit problem}, where the reward distribution of an arm depends on its state, which changes over time based on our past actions.

The restless bandit problem is notoriously intractable \cite{papadimitriou1999}. Most recent efforts, such as \emph{recovering bandits} \cite{pike2019recovering}, \emph{rotting bandits} \cite{seznec2020single}, and \emph{Brownian bandits} \cite{slivkins2008adapting}, only study some special instances of the restless bandit problem. The fundamental challenge of the restless bandit problem lies in the explosion of state space, as the state of the entire system is the Cartesian product of the states of individual arms. 
A powerful tool traditionally used to solve the RMABs' decision-making problem is the Whittle index policy \cite{whittle1988restless}. In a nutshell, the Whittle index policy calculates a Whittle index for each arm based on the arm's current state, where the index corresponds to the amount of cost that we are willing to pay to play the arm, and then plays the arm with the highest index. 
When the indexability condition is satisfied, it has been shown that the Whittle index policy is asymptotically optimal in a wide range of settings.

In this paper, we present Neural Whittle Index Network (NeurWIN), a principled machine learning approach that finds the Whittle indices for virtually all restless bandit problems. We note that the Whittle index is an artificial construct that cannot be directly measured. Finding the Whittle index is typically intractable. As a result, the Whittle indices of many practical problems remain unknown except for a few special cases.

We are able to circumvent the challenges of finding the Whittle indices by leveraging an important mathematical property of the Whittle index: Consider an alternative problem where there is only one arm and we decide whether to play the arm in each time instance. In this problem, we need to pay a constant cost of $\lambda$ every time we play the arm. The goal is to maximize the long-term discounted net reward, defined as the difference between the rewards we obtain from the arm and the costs we pay to play it. Then, the optimal policy is to play the arm whenever the Whittle index becomes larger than $\lambda$. Based on this property, a neural network that produces the Whittle index can be viewed as one that finds the optimal policy for the alternative problem for any $\lambda$.

Using this observation, we propose a deep reinforcement learning method to train NeurWIN. To demonstrate the power of NeurWIN, we employ NeurWIN for three recently studied restless bandit problems, namely, recovering bandit \cite{pike2019recovering}, wireless scheduling \cite{aalto2015}, and stochastic deadline scheduling \cite{yu2018}. We compare NeurWIN against five other reinforcement learning algorithms and the application-specific baseline policies in the respective restless bandit problems. Experiment results show that the index policy using our NeurWIN significantly outperforms other reinforcement learning algorithms. Moreover, for problems where the Whittle indices are known, NeurWIN has virtually the same performance as the corresponding Whittle index policy, showing that NeurWIN indeed learns a precise approximation to the Whittle indices.

The rest of the paper is organized as follows: Section \ref{section:related_work} reviews related literature. Section \ref{section:prob_setting} provides formal definitions of the Whittle index and our problem statement. Section \ref{section:neurwin} introduces our training algorithm for NeurWIN. Section \ref{section:experiments} demonstrates the utility of NeurWIN by evaluating its performance under three recently studied restless bandit problems. Finally, Section \ref{section:conclusion} concludes the paper.

\section{Related work} \label{section:related_work}

Restless bandit problems were first introduced in \cite{whittle1988restless}.
They are known to be intractable, and are in general PSPACE hard \cite{papadimitriou1999}. As a result, many studies focus on finding the Whittle index policy for restless bandit problems, such as in \cite{ny2008,dance2015, meshram2018, tripathi2019}. However, these studies are only able to find the Whittle indices under various specific assumptions about the bandit problems. 

There have been many studies on applying RL methods for bandit problems.
\cite{dann2017} proposed a tool called Uniform-PAC for contextual bandits. 
\cite{zanette2018} described a framework-agnostic approach towards guaranteeing RL algorithms' performance.
\cite{jiang2017} introduced contextual decision processes (CDPs) that encompass contextual bandits for RL exploration with function approximation. \cite{riquelme2018} compared deep neural networks with Bayesian linear regression against other posterior sampling methods.
However, none of these studies are applicable to restless bandits, where the state of an arm can change over time.

Deep RL algorithms have been utilized in problems that resemble restless bandit problems, including HVAC control \cite{wei2017deep}, cyber-physical systems \cite{leong2020}, and dynamic multi-channel access \cite{wang2018}.
In all these cases, a major limitation for deep RL is scalability. As the state spaces grows exponentially with the number of arms, these studies can only be applied to small-scale systems, and their evaluations are limited to cases when there are at most 5 zones, 6 sensors, and 8 channels, respectively.

An emerging research direction is applying machine learning algorithms to learn Whittle indices.
\cite{borkar2018} proposed employing the LSPE(0) algorithm \cite{yu2009} coupled with a polynomial function approximator.
The approach was applied in \cite{borkar2019} for scheduling web crawlers.
However, this work can only be applied to restless bandits whose states can be represented by a single number, and it only uses a polynomial function approximator, which may have low representational power \cite{Sutton2018}.

\cite{biswas_learn_2021} studied a public health setting and models it as a restless bandit problem. A Q-learning based Whittle index approach was formulated to ideally pick patients for interventions based on their states.
\cite{fu2019} proposed a Q-learning based heuristic to find Whittle indices. However, as shown in its experiment results, the heuristic may not produce Whittle indices even when the training converges.
\cite{avrachenkov2020whittle} proposed WIBQL: a Q-learning method for learning the Whittle indices by applying a modified tabular relative value iteration (RVI) algorithm from \cite{rviqlearning}.
The experiments presented here show that WIBQL does not scale well with large state spaces.

\section{Problem Setting} \label{section:prob_setting}

In this section, we provide a brief overview of restless bandit problems and the Whittle index. We then formally define the problem statement.

\subsection{Restless Bandit Problems}

A restless bandit problem consists of $N$ restless arms. In each round $t$, a control policy $\pi$ observes the state of each arm $i$, denoted by $s_i[t]$, and selects $M$ arms to activate. We call the selected arms as \emph{active} and the others as \emph{passive}. We use $a_i[t]$ to denote the policy's decision on each arm $i$, where $a_i[t]=1$ if the arm is active and $a_i[t]=0$ if it is passive at round $t$. Each arm $i$ generates a stochastic reward $r_i[t]$ with distribution $R_{i,act}(s_i[t])$ if it is active, and with distribution $R_{i,pass}(s_i[t])$ if it is passive. The state of each arm $i$ in the next round evolves by the transition kernel of either $P_{i,act}(s_i[t])$ or $P_{i,pass}(s_i[t])$, depending on whether the arm is active. The goal of the control policy is to maximize the expected total discounted reward, which can be expressed as $\mathbb{E}_{\pi} [\sum_{t=0}^\infty\sum_{i=1}^N\beta^tr_i[t]]$ with $\beta$ being the discount factor.

A control policy is effectively a function that takes the vector $(s_1[t], s_2[t],\dots, s_N[t])$ as the input and produces the vector $(a_1[t], a_2[t], \dots, a_N[t])$ as the output. It should be noted that the space of input is exponential in $N$. If each arm can be in one of $K$ possible states, then the number of possible inputs is $K^N$. This feature, which is usually referred to as the curse of dimensionality, makes finding the optimal control policy intractable.

\subsection{The Whittle Index}

An index policy seeks to address the curse of dimensionality through decomposition. In each round, it calculates an index, denoted by $W_i(s_i[t])$, for each arm $i$ based on its current state. The index policy then selects the $M$ arms with the highest indices to activate. It should be noted that the index of an arm $i$ is independent from the states of any other arms.
In this sense, learning the Whittle index of a restless arm is an \emph{auxiliary task} to finding the control policy for restless bandits.

Obviously, the performance of an index policy depends on the design of the index function $W_i(\cdot)$. A popular index with solid theoretical foundation is the Whittle index, which is defined below. Since we only consider one arm at a time, we drop the subscript $i$ for the rest of the paper.

Consider a system with only one arm, and an activation policy that determines whether to activate the arm in each round $t$. Suppose that the activation policy needs to pay an activation cost of $\lambda$ every time it chooses to activate the arm. The goal of the activation policy is to maximize the total expected discounted net reward, $\mathbb{E}[\sum_{t=0}^\infty \beta^t(r[t] - \lambda a[t])]$. The optimal activation policy can be expressed by the set of states in which it would activate this arm for a particular $\lambda$, and we denote this set by $\mathcal{S}(\lambda)$. Intuitively, the higher the cost, the less likely the optimal activation policy would activate the arm in a given state, and hence the set $\mathcal{S}(\lambda)$ would decrease monotonically. When an arm satisfies this intuition, we say that the arm is \emph{indexable}.

\begin{definition}[Indexability]  \label{definition:indexability}
 An arm is said to be indexable if $\mathcal{S}(\lambda)$ decreases monotonically from the set of all states to the empty set as $\lambda$ increases from $-\infty$ to $\infty$. A restless bandit problem is said to be indexable if all arms are indexable.
\end{definition}

\begin{definition}
[The Whittle Index] If an arm is indexable, then its Whittle index of each state $s$ is defined as $W(s):=\sup_{\lambda}\{\lambda: s\in \mathcal{S}(\lambda)\}$.
\end{definition}

Even when an arm is indexable, finding its Whittle index can still be intractable, especially when the transition kernel of the arm is convoluted.\footnote{\cite{nino2007dynamic} described a generic approach for finding the Whittle index. The complexity of this approach is at least exponential to the number of states.} Our NeurWIN finds the Whittle index by leveraging the following property of the Whittle index:  Consider the single-armed bandit problem. Suppose the initial state of an indexable arm is $s$ at round one. Consider two possibilities: The first is that the activation policy activates the arm at round one, and then uses the optimal policy starting from round two; and the second is that the activation policy does not activate the arm at round one, and then uses the optimal policy starting from round two. Let $Q_{\lambda,act}(s)$ and $Q_{\lambda,pass}(s)$ be the expected discounted net reward for these two possibilities, respectively, and let $D_s(\lambda):=\big(Q_{\lambda, act}(s) - Q_{\lambda,pass}(s)\big)$ be their difference. Clearly, the optimal activation policy should activate an arm under state $s$ and activation cost $\lambda$ if $D_s(\lambda)\geq 0$. 
We present the property more formally in the following proposition:

\begin{proposition}   \label{corollary:one-arm-optimal}
\cite[Thm 3.14]{zhao2019multi} If an arm is indexable, then, for every state $s$, $D_s(\lambda)\geq 0$ if and only if $\lambda\leq W(s)$.
\end{proposition}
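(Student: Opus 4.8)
The plan is to deduce the equivalence from two structural facts about the single-armed problem — a Bellman characterization of the activation set $\mathcal{S}(\lambda)$, and continuity of $\lambda\mapsto D_s(\lambda)$ — after which the indexability hypothesis (Definition~\ref{definition:indexability}) finishes the argument almost immediately.

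First I would identify $\mathcal{S}(\lambda)$ with a sign condition on $D_s$. For a fixed $\lambda$ the single-armed problem is a discounted MDP with bounded rewards, so standard dynamic-programming theory yields an optimal value function $V_\lambda$ obeying the Bellman optimality equation $V_\lambda(s)=\max\{Q_{\lambda,act}(s),\,Q_{\lambda,pass}(s)\}$, where $Q_{\lambda,act}(s)=\bar r_{act}(s)-\lambda+\beta\,\mathbb{E}_{s'\sim P_{act}(s)}[V_\lambda(s')]$ and $Q_{\lambda,pass}(s)=\bar r_{pass}(s)+\beta\,\mathbb{E}_{s''\sim P_{pass}(s)}[V_\lambda(s'')]$ are precisely the quantities appearing in the statement ($\bar r_{act},\bar r_{pass}$ denoting the means of $R_{act},R_{pass}$). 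A stationary policy is optimal iff in each state it selects an action attaining this maximum, and, with the customary convention of breaking ties in favour of activation, the optimal policy activates in state $s$ exactly when $Q_{\lambda,act}(s)\ge Q_{\lambda,pass}(s)$. Hence
\[
\mathcal{S}(\lambda)=\{\,s:D_s(\lambda)\ge 0\,\}.
\]

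Second I would show that $\lambda\mapsto D_s(\lambda)$ is continuous for each $s$. For any policy $\pi$ the expected discounted net return from a given state is $G_\pi-\lambda H_\pi$, where $G_\pi$ is its expected discounted reward and $H_\pi\in[0,1/(1-\beta)]$ its expected discounted number of activations; thus $V_\lambda(s)=\sup_\pi(G_\pi-\lambda H_\pi)$ is a supremum of affine non-increasing functions, hence convex and $1/(1-\beta)$-Lipschitz in $\lambda$, uniformly in $s$. Consequently $\lambda\mapsto\mathbb{E}[V_\lambda(s')]$ is continuous under any transition law, and so is
\[
D_s(\lambda)=\bar r_{act}(s)-\bar r_{pass}(s)-\lambda+\beta\Big(\mathbb{E}_{s'\sim P_{act}(s)}[V_\lambda(s')]-\mathbb{E}_{s''\sim P_{pass}(s)}[V_\lambda(s'')]\Big).
\]

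Finally I would combine these. Indexability says $\mathcal{S}(\cdot)$ is non-increasing, so by the first step $\Lambda_s:=\{\lambda:D_s(\lambda)\ge0\}=\{\lambda:s\in\mathcal{S}(\lambda)\}$ is a down-set which is closed (by continuity) and, since $\mathcal{S}(\lambda)$ ranges from the set of all states down to the empty set, non-empty and bounded above; hence $\Lambda_s=(-\infty,W(s)]$ with $W(s)=\sup\Lambda_s$ finite. Thus $\lambda>W(s)\Rightarrow D_s(\lambda)<0$, $\lambda<W(s)\Rightarrow D_s(\lambda)\ge0$, and $D_s(W(s))\ge0$ by choosing $\lambda_n\uparrow W(s)$ with $D_s(\lambda_n)\ge0$ and passing to the limit; so $D_s(\lambda)\ge0$ iff $\lambda\le W(s)$. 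I expect the main obstacle to be the first step: turning the informal phrase ``the set of states in which the optimal policy activates'' into the identity $\mathcal{S}(\lambda)=\{s:D_s(\lambda)\ge0\}$ requires invoking existence of an optimal stationary deterministic policy and the Bellman optimality equation for a (possibly countably-infinite-state) discounted MDP, and pinning down the tie-breaking convention without which $\mathcal{S}(\lambda)$ is not even single-valued. Granting that identification and the continuity of $D_s$, indexability closes the argument essentially for free, the only genuinely delicate point being the boundary case $\lambda=W(s)$ — precisely where continuity is needed to obtain a closed (``$\le$'') rather than open (``$<$'') condition.
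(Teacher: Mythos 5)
Your argument is correct, but note that the paper does not prove this proposition at all --- it is imported verbatim by citation from an external reference (Thm.\ 3.14 of the cited work), so there is no in-paper proof to compare against; you have supplied a self-contained derivation where the authors rely on a black-box result. Your three-step route is the standard one and it holds up: (i) the identification $\mathcal{S}(\lambda)=\{s: D_s(\lambda)\ge 0\}$ via the Bellman optimality equation is exactly the convention the paper adopts implicitly when it writes that ``the optimal activation policy should activate an arm under state $s$ and activation cost $\lambda$ if $D_s(\lambda)\ge 0$,'' and you are right that a tie-breaking convention must be fixed for $\mathcal{S}(\lambda)$ to be well defined; (ii) the Lipschitz continuity of $\lambda\mapsto V_\lambda(s)$ as a supremum of affine functions with slopes in $[-1/(1-\beta),0]$ is valid given bounded rewards, and it propagates to $D_s(\lambda)$ through the one-step Bellman backup; (iii) indexability (Definition~\ref{definition:indexability}) makes $\Lambda_s=\{\lambda: D_s(\lambda)\ge 0\}$ a nonempty, bounded-above down-set, and closedness from step (ii) upgrades it to $(-\infty,W(s)]$ with $W(s)=\sup\Lambda_s$, matching the paper's definition of the Whittle index. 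You correctly flag the only genuinely delicate point, namely the boundary case $\lambda=W(s)$, where continuity is what converts the strict inequality one gets for free into the closed condition $D_s(W(s))\ge 0$ that the ``if and only if'' with ``$\le$'' requires. The one caveat worth stating explicitly if this were to be included as a proof is the standing regularity assumption (bounded rewards and existence of an optimal stationary deterministic policy for the discounted single-armed MDP), which the paper never spells out but which your step (i) needs.
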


Our NeurWIN uses Prop. \ref{corollary:one-arm-optimal} to train neural networks that predict the Whittle index for any indexable arms. From Def.~\ref{definition:indexability}, a sufficient condition for indexability is when $D_s(\lambda)$ is a decreasing function. Thus, we define the concept of \emph{strong indexability} as follows:

\begin{definition} [Strong Indexability] \label{def:strong_indexability}
An arm is said to be strongly indexable if $D_s(\lambda)$ is strictly decreasing in $\lambda$ for every state $s$.
\end{definition}

Intuitively, as the activation cost increases, it becomes less attractive to activate the arm in any given state. Hence, one would expect $D_s(\lambda)$ to be strictly decreasing in $\lambda$ for a particular state $s$.
In Section \ref{subsec:limitations}, we
further use numerical results to show that all three applications we evaluate in this paper are strongly indexable.

\subsection{Problem Statement} \label{subsection:prob_state}

We now formally describe the objective of this paper. We assume that we are given a simulator of one single restless arm as a black box. 
The simulator provides two functionalities: First, it allows us to set the initial state of the arm to any arbitrary state $s$. Second, in each round $t$, the simulator takes $a[t]$, the indicator function that the arm is activated, as the input and produces the next state $s[t+1]$ and the reward $r[t]$ as the outputs. 

Our goal is to derive low-complexity index algorithms for restless bandit problems by training a neural network that approximates the Whittle index of each restless arm using its simulator. A neural network takes the state $s$ as the input and produces a real number $f_\theta(s)$ as the output, where $\theta$ is the vector containing all weights and biases of the neural network. Recall that $W(s)$ is the Whittle index of the arm. We aim to find appropriate $\theta$ that makes $|f_\theta(s)-W(s)|$ small for all $s$. Such a neural network is said to be \emph{Whittle-accurate}.

\begin{definition}[Whittle-accurate]
A neural network with parameters $\theta$ is said to be $\gamma$-Whittle-accurate if $|f_\theta(s)-W(s)|\leq\gamma$, for all $s$.
\end{definition}

\section{NeurWIN Algorithm: Neural Whittle Index Network} \label{section:neurwin}

In this section, we present NeurWIN, a deep-RL algorithm that trains neural networks to predict the Whittle indices. Since the Whittle index of an arm is independent from other arms, NeurWIN trains one neural network for each arm independently. In this section, we discuss how NeurWIN learns the Whittle index for one single arm.

\subsection{Conditions for Whittle-Accurate}

Before presenting NeurWIN, we discuss the conditions for a neural network to be $\gamma$-Whittle-accurate.

Suppose we are given a simulator of an arm and a neural network with parameters $\theta$. We can then construct an environment of the arm along with an activation cost $\lambda$ as shown in Fig.~\ref{fig:motivation}. 
In each round $t$, the environment takes the real number $f_\theta(s[t])$ as the input. 

\begin{wrapfigure}{r}{0.55\textwidth}
\vspace{10pt}
\begin{center}
\includegraphics[width=0.55\textwidth]{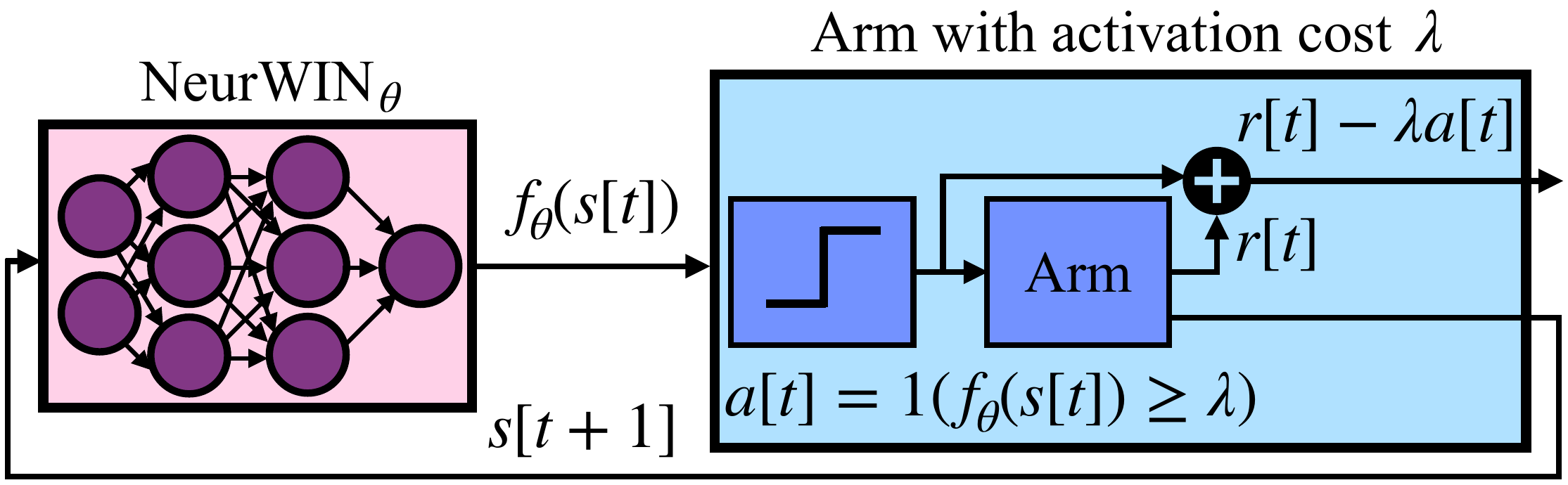} 
\end{center}
\caption{An illustrative motivation of NeurWIN.}
\vspace{-25pt}
\label{fig:motivation}
\end{wrapfigure}

The input is first fed into a step function to produce $a[t] = 1\big(f_\theta(s[t])\geq \lambda\big)$, where $1(\cdot)$ is the indicator function. Then, $a(t)$ is fed into the simulator of the arm to produce $r[t]$ and $s[t+1]$. Finally, the environment outputs the net reward $r[t]-\lambda a[t]$ and the next state $s[t+1]$. We call this environment $Env(\lambda)$. Thus, the neural network can be viewed as a controller for $Env(\lambda)$. The following corollary is a direct result from

Prop.~\ref{corollary:one-arm-optimal}.

\begin{corollary}   \label{corollary:necessary}
If the arm is indexable and $f_\theta(s)=W(s), \forall s$, then the neural network with parameters $\theta$ is the optimal controller for $Env(\lambda)$, for any $\lambda$ and initial state $s[0]$. Moreover, given $\lambda$ and $s[0]$, the optimal discounted net reward is $\max\{ Q_{\lambda, act}(s[0]), Q_{\lambda, pass}(s[0])\}$. 
\end{corollary}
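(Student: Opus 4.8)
The plan is to recognize $Env(\lambda)$, together with the controller $f_\theta=W$, as inducing a stationary deterministic policy for the single-armed discounted MDP with per-round reward $r[t]-\lambda a[t]$, and then to show this policy attains the optimal value function by a one-step Bellman argument driven by Prop.~\ref{corollary:one-arm-optimal}. First I would record the ``moreover'' claim: since $Q_{\lambda,act}(s)$ and $Q_{\lambda,pass}(s)$ are by definition the expected discounted net rewards obtained by taking, respectively, $a[1]=1$ and $a[1]=0$ in state $s$ and acting optimally thereafter, they are exactly the optimal action-value function evaluated at the two actions. The Bellman optimality equation for a discounted MDP therefore gives that the optimal value function is $V^{*}_{\lambda}(s)=\max\{Q_{\lambda,act}(s),Q_{\lambda,pass}(s)\}$, which is the asserted optimal discounted net reward for initial state $s[0]=s$.

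Next I would identify the stationary policy $\pi_W$ that the controller realizes in $Env(\lambda)$: in round $t$ it feeds $f_\theta(s[t])=W(s[t])$ into the step function, so it activates exactly when $W(s[t])\geq\lambda$. Applying Prop.~\ref{corollary:one-arm-optimal}, $W(s)\geq\lambda \iff D_s(\lambda)\geq 0 \iff Q_{\lambda,act}(s)\geq Q_{\lambda,pass}(s)$, and conversely $W(s)<\lambda \iff Q_{\lambda,act}(s)<Q_{\lambda,pass}(s)$. Hence, in every state, $\pi_W$ selects an action attaining the maximum in the Bellman optimality equation above --- that is, $\pi_W$ is greedy with respect to $V^{*}_{\lambda}$ (the tie $\lambda=W(s)$ is broken toward activation, which is harmless since the two action-values coincide there).

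To finish, I would invoke the standard fact that a stationary policy greedy with respect to $V^{*}$ is optimal; concretely, the previous step shows $V^{*}_{\lambda}$ is a fixed point of the policy-evaluation operator $T^{\pi_W}$, and since $T^{\pi_W}$ is a $\beta$-contraction on bounded functions it has the unique fixed point $V^{\pi_W}_{\lambda}$, so $V^{\pi_W}_{\lambda}=V^{*}_{\lambda}$. Because discounted MDPs admit stationary deterministic optimal policies, no controller --- indeed no policy at all --- can exceed $V^{*}_{\lambda}$, so the controller $f_\theta=W$ is optimal for $Env(\lambda)$ for every $\lambda$ and every initial state $s[0]$, completing the proof.

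I expect the only real obstacles to be bookkeeping rather than conceptual: one must assume enough regularity (e.g.\ bounded net reward, so the contraction argument applies and an optimal stationary policy exists) for the Bellman machinery to be valid, and one must confirm that the $Q$-values as defined in the text genuinely equal the \emph{optimal} action-value function rather than the value of some suboptimal continuation --- both are immediate from the wording ``and then uses the optimal policy starting from round two,'' but should be stated explicitly. Everything else follows directly from Prop.~\ref{corollary:one-arm-optimal}.
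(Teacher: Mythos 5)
Your proposal is correct and matches the paper's intent: the paper offers no written proof, merely noting the corollary is a direct consequence of Prop.~\ref{corollary:one-arm-optimal}, and your Bellman-greedy argument (activate iff $W(s)\geq\lambda$ iff $Q_{\lambda,act}(s)\geq Q_{\lambda,pass}(s)$, hence the step-function controller is greedy with respect to the optimal value function and therefore optimal) is exactly the fleshed-out version of that assertion. Your handling of the tie case $\lambda=W(s)$ and the identification of the optimal value as $\max\{Q_{\lambda,act}(s[0]),Q_{\lambda,pass}(s[0])\}$ are both consistent with the paper's definitions.
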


Corollary~\ref{corollary:necessary} can be viewed as a necessary condition for a neural network to be 0-Whittle-accurate. Next, we establish a sufficient condition that shows how a \emph{near-optimal} neural network controller for $Env(\lambda)$ is also Whittle-accurate. Let $\tilde{Q}_\theta(s,\lambda)$ be the average reward of applying a neural network with parameters $\theta$ to $Env(\lambda)$ with initial state $s$. We can then formally define the concept of near-optimality as follows.

\begin{definition} [$\epsilon$-optimal neural network] \label{definition:nn_accurate}
 A neural network with parameters $\theta$ is said to be $\epsilon$-optimal if there exists a small positive number $\delta$ such that $\tilde{Q}_\theta(s_1, \lambda)\geq \max\{Q_{\lambda, act}(s_1), Q_{\lambda, pass}(s_1)\}-\epsilon$ for all $s_0, s_1$, and $\lambda\in [f_\theta(s_0)-\delta, f_{\theta}(s_0)+\delta]$.
\end{definition}

Having outlined the necessary terms, we now move to establishing a sufficient condition for the $\gamma$-Whittle-accuracy of a neural network applied on $Env(\lambda$).

\begin{theorem} \label{theorem:sufficient}
If the arm is strongly indexable, then for any
$\gamma > 0$, there exists a positive $\epsilon$ such that any $\epsilon$-optimal neural network controlling $Env(\lambda)$ is also $\gamma$-Whittle-accurate.

\end{theorem}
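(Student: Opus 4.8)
The plan is to argue by contradiction: assume the network is $\epsilon$-optimal for a suitably small $\epsilon$ (to be pinned down only at the end) but fails $\gamma$-Whittle-accuracy at some state $\hat s$, and then exhibit an activation cost $\lambda$ very close to $f_\theta(\hat s)$ for which the network's hard-threshold controller in $Env(\lambda)$ is forced into the wrong \emph{first} action and therefore forfeits a fixed positive amount of net reward — strictly more than $\epsilon$. Throughout I read $\tilde Q_\theta$ as the expected discounted net reward, consistent with the definitions of $Q_{\lambda,act}$ and $Q_{\lambda,pass}$, so that the ``best value among policies that act first / are passive first'' bounds used below are immediate.

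\emph{Step 1 (the wrong first action leaks net reward).} Suppose $f_\theta(\hat s) > W(\hat s) + \gamma$. Let $\delta>0$ be the constant from Definition~\ref{definition:nn_accurate} and set $\lambda^\star := f_\theta(\hat s) - \min\{\delta,\gamma/2\}$, so that $\lambda^\star \in [f_\theta(\hat s)-\delta,\, f_\theta(\hat s)]$ and $\lambda^\star > W(\hat s)+\gamma/2 > W(\hat s)$. Running $Env(\lambda^\star)$ from initial state $\hat s$, the controller outputs $a[1]=1\bigl(f_\theta(\hat s)\ge\lambda^\star\bigr)=1$, i.e.\ it activates at round one; hence $\tilde Q_\theta(\hat s,\lambda^\star)\le Q_{\lambda^\star,act}(\hat s)$. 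Applying $\epsilon$-optimality with both dummy states equal to $\hat s$ (legitimate since $\lambda^\star\in[f_\theta(\hat s)-\delta,f_\theta(\hat s)+\delta]$) gives $\tilde Q_\theta(\hat s,\lambda^\star)\ge\max\{Q_{\lambda^\star,act}(\hat s),Q_{\lambda^\star,pass}(\hat s)\}-\epsilon$. Combining the two bounds, $D_{\hat s}(\lambda^\star)=Q_{\lambda^\star,act}(\hat s)-Q_{\lambda^\star,pass}(\hat s)\ge -\epsilon$. The symmetric case $f_\theta(\hat s)<W(\hat s)-\gamma$ uses $\lambda^\star:=f_\theta(\hat s)+\min\{\delta,\gamma/2\}$, which satisfies $\lambda^\star>f_\theta(\hat s)$ and $\lambda^\star<W(\hat s)-\gamma/2$; now the controller is passive at round one, so $\tilde Q_\theta(\hat s,\lambda^\star)\le Q_{\lambda^\star,pass}(\hat s)$, and the same manipulation yields $D_{\hat s}(\lambda^\star)\le\epsilon$.

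\emph{Step 2 (strong indexability forbids this).} By Proposition~\ref{corollary:one-arm-optimal}, $D_{\hat s}(\lambda)<0$ for $\lambda>W(\hat s)$ and $D_{\hat s}(\lambda)\ge 0$ for $\lambda\le W(\hat s)$; strict monotonicity of $D_{\hat s}$ upgrades the second statement to $D_{\hat s}(W(\hat s)-\gamma/2)>0$ and, again by monotonicity, propagates both gaps to $\lambda^\star$: $D_{\hat s}(\lambda^\star)\le D_{\hat s}(W(\hat s)+\gamma/2)<0$ in the first case, and $D_{\hat s}(\lambda^\star)\ge D_{\hat s}(W(\hat s)-\gamma/2)>0$ in the second. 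It therefore suffices to choose
\[
\epsilon \;:=\; \tfrac12\,\inf_{s}\,\min\Bigl\{\, D_s\bigl(W(s)-\tfrac{\gamma}{2}\bigr),\; -\,D_s\bigl(W(s)+\tfrac{\gamma}{2}\bigr)\,\Bigr\},
\]
for then $D_{\hat s}(\lambda^\star)\le -2\epsilon<-\epsilon$ (resp.\ $D_{\hat s}(\lambda^\star)\ge 2\epsilon>\epsilon$), contradicting Step 1. Hence no bad state $\hat s$ exists and the network is $\gamma$-Whittle-accurate.

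\emph{Main obstacle.} The delicate point is that the $\epsilon$ displayed above must be \emph{strictly positive}, i.e.\ the per-state slacks $D_s(W(s)-\gamma/2)$ and $-D_s(W(s)+\gamma/2)$ must be bounded away from $0$ uniformly in $s$. Each is positive by strong indexability, so when the state space is finite the infimum is a minimum of finitely many positive numbers and the argument closes immediately; I would present the theorem in that regime. For an infinite state space one needs an extra ingredient — continuity/compactness of $s\mapsto D_s(\cdot)$, or, more naturally, a uniform strengthening of strong indexability guaranteeing a common slack bound — and I would state that hypothesis explicitly rather than leave it implicit. Everything else in the argument is routine bookkeeping.
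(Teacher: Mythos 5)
Your proof is correct and follows essentially the same route as the paper's: argue the contrapositive, pick an activation cost inside the $\delta$-window around $f_\theta(\hat s)$ that forces the wrong first action, and lower-bound the resulting loss by half the minimum over states of the gap of $D_s$ evaluated a fixed distance from $W(s)$. If anything your version is tidier: the choice $\lambda^\star=f_\theta(\hat s)\mp\min\{\delta,\gamma/2\}$ with $\epsilon$ defined at $W(s)\pm\gamma/2$ repairs the paper's own slip of setting $\lambda=f_\theta(s')+\delta$ while asserting $f_\theta(s')>\lambda$, and your closing caveat that $\inf_s$ must be strictly positive (automatic only for finite state spaces) makes explicit an assumption the paper leaves silent.
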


\begin{proof}
For a given $\gamma > 0$, let,
\begin{align*}
\epsilon =\min_s\{\min\{ Q_{W(s)+\gamma,pass}(s)-Q_{W(s)+\gamma,act}(s), Q_{W(s)-\gamma,act}(s)-Q_{W(s)-\gamma,pass}(s)\}\}/2.    
\end{align*} 
Since the arm is strongly indexable and $W(s)$ is its Whittle index, we have $\epsilon>0$.

We prove the theorem by establishing the following equivalent statement: If the neural network is not $\gamma$-Whittle-accurate, then there exists states $s_0, s_1$, activation cost  $\lambda\in[f_\theta(s_0)-\delta, f_\theta(s_0)+\delta]$, such that the discounted net reward of applying a neural network to $Env(\lambda)$ with initial state $s_1$ is strictly less than $\max\{ Q_{\lambda, act}(s_1), Q_{\lambda, pass}(s_1)\}-\epsilon$.

Suppose the neural network is not $\gamma$-Whittle-accurate, then there exists a state $s'$ such that $|f_\theta(s')-W(s')|>\gamma$. We set $s_0=s_1=s'$. For the case $f_\theta(s')>W(s')+\gamma$, we set $\lambda=f_\theta(s')+\delta$. Since $\lambda>W(s')+\gamma$, we have $\max\{ Q_{\lambda, act}(s'), Q_{\lambda, pass}(s')\}=Q_{\lambda, pass}(s')$ and $Q_{\lambda, pass}(s')-Q_{\lambda, act}(s')\geq2\epsilon$. On the other hand, since $f_\theta(s')>\lambda$, the neural network would activate the arm in the first round and its discounted reward is at most 
$Q_{\lambda, act}(s')<Q_{\lambda, pass}(s')-2\epsilon < 
\max\{ Q_{\lambda, act}(s'), Q_{\lambda, pass}(s')\} -\epsilon.$

For the case $f_\theta(s')<W(s')-\gamma$, a similar argument shows that the discounted reward for the neural network when $\lambda=f_\theta(s')-\delta$ is smaller than $\max\{ Q_{\lambda, act}(s'), Q_{\lambda, pass}(s')\}-\epsilon$. This completes the proof.

\end{proof}

\subsection{Training Procedure for NeurWIN} \label{subsec:training_neurwin}
Based on Thm.~\ref{theorem:sufficient} and Def.~\ref{definition:nn_accurate}, we define our objective function as $\sum_{s_0, s_1}\tilde{Q}_\theta(s_1, \lambda = f_\theta(s_0))$, with the estimated index $f_\theta (s_0)$ set as the environment's activation cost.
A neural network that achieves a near-optimal $\sum_{s_0, s_1}\tilde{Q}_\theta(s_1, f_\theta(s_0))$ is also Whittle-accurate, which motivates the usage of deep reinforcement learning to find Whittle-accurate neural networks.
Therefore we propose NeurWIN: an algorithm based on REINFORCE \cite{williams1992} to update $\theta$ through stochastic gradient ascent, where the gradient is defined as $\nabla_\theta \sum_{s_0, s_1} \tilde{Q}_\theta(s_1, f_\theta(s_0))$. For the gradient to exist, we require the output of the environment to be differentiable with respect to the input $f_\theta(s[t])$. To fulfill the requirement, we replace the step function in Fig.~\ref{fig:motivation} with a sigmoid function,

\begin{equation}
\sigma_m(f_\theta(s[t])-\lambda):=\frac{1}{1 + e^{-m(f_\theta(s[t])-\lambda)}}
\end{equation}
Where $m$ is a sensitivity parameter. 
The environment then chooses $a[t]=1$ with probability $\sigma_m(f_\theta(s[t])-\lambda)$, and $a[t]=0$ with probability $1-\sigma_m(f_\theta(s[t])-\lambda)$. We call this differentiable environment $Env^*(\lambda)$.

The complete NeurWIN pseudocode is provided in Alg.~\ref{algorithm:NeurWIN}.
Our training procedure consists of multiple mini-batches, where each mini-batch is composed of $R$ episodes. At the beginning of each mini-batch, we randomly select two states $s_0$ and $s_1$.
Motivated by the condition in Thm.~\ref{theorem:sufficient}, we consider the environment $Env^*(f_\theta(s_0))$ with initial state $s_1$, and aim to improve the empirical discounted net reward of applying the neural network to such an environment.

In each episode $e$ from the current mini-batch, we set $\lambda=f_\theta(s_0)$ and initial state to be $s_1$. We then apply the neural network with parameters $\theta$ to $Env^*(\lambda)$ and observe the sequences of actions $\big(a[1], a[2], \dots\big)$ and states $\big(s[1], s[2],\dots\big)$. We can use these sequences to calculate their gradients with respect to $\theta$ through backward propagation, which we denote by $h_e$.
At the end of the mini-batch, NeurWIN would have stored the accumulated gradients for each of the $R$ mini-batch episodes to tune the parameters.

We also observe the discounted net reward and denote it by $G_e$. After all episodes in the mini-batch finish, we calculate the average of all $G_e$ as a bootstrapped baseline and denote it by $\bar{G}_b$. Finally, we do a weighted gradient ascent with the weight for episode $e$ being its offset net reward, $G_e-\bar{G}_b$.

When the step size is chosen appropriately, the neural network will be more likely to follow the sequences of actions of episodes with larger $G_e$ after the weighted gradient ascent, and thus will have a better empirical discounted net reward.

\begin{algorithm}[tb]
\caption{NeurWIN Training}

\begin{algorithmic}
\STATE {\bfseries{Input:} Parameters $\theta$, discount factor $\beta \in (0, 1)$, learning rate $L$, sigmoid parameter $m$, mini-batch size $R$.}
\STATE {\bfseries{Output:} Trained neural network parameters $\theta^{+}$.}
\FOR{each mini-batch $b$}
\STATE Choose two states $s_0$ and $s_1$ uniformly at random, and set $\lambda\leftarrow f_\theta(s_0)$ and $\bar{G}_b\leftarrow 0$ 
\FOR{each episode $e$ in the mini-batch}
\STATE Set the arm to initial state $s_1$, and set $h_e\leftarrow 0$
\FOR{each round $t$ in the episode}
\STATE Choose $a[t]=1$ w.p. $\sigma_m(f_\theta(s[t])-\lambda)$, and $a[t]=0$ w.p. $1-\sigma_m(f_\theta(s[t])-\lambda)$
\IF {$a[t] = 1$}
\STATE $h_e\leftarrow h_e+\nabla_{\theta}\ln(\sigma_m(f_\theta(s[t])-\lambda))$
\ELSE 
\STATE $h_e\leftarrow h_e+\nabla_{\theta}\ln(1-\sigma_m(f_\theta(s[t])-\lambda))$
\ENDIF 
\ENDFOR
\STATE $G_e\leftarrow$ empirical discounted net reward in episode $e$
\STATE $\bar{G}_b\leftarrow \bar{G}_b + G_e/R$
\ENDFOR

\STATE $L_b\leftarrow$ learning rate in mini-batch $b$
\STATE Update parameters through gradient ascent $\theta \leftarrow \theta + L_b \sum_e (G_e - \bar{G}_b)h_e$ 
\ENDFOR
\end{algorithmic}
\label{algorithm:NeurWIN}
\end{algorithm}

\section{Experiments} \label{section:experiments}

\subsection{Overview}

In this section, we demonstrate NeurWIN's utility by evaluating it under three recently studied applications of restless bandit problems. In each application, we consider that there are $N$ arms and a controller can play $M$ of them in each round. We evaluate three different pairs of $(N,M)$: $(4,1), (10, 1),$ and $(100,25)$, and average the results of 50 independent runs when the problems are stochastic. Some applications consider that different arms can have different behaviors. For such scenarios, we consider that there are multiple types of arms and train a separate NeurWIN for each type. During testing, the controller calculates the index of each arm based on the arm's state and schedules the $M$ arms with the highest indices.

The performance of NeurWIN is compared against the proposed policies in the respective recent studies. In addition, we also evaluate the REINFORCE \cite{williams1992}, Wolpertinger-DDPG (WOLP-DDPG) \cite{dulacarnold2016deep}, Amortized Q-learning (AQL) \cite{vandewiele2020qlearning}, QWIC \cite{fu2019}, and WIBQL \cite{avrachenkov2020whittle}. 
REINFORCE is a classical policy-based RL algorithm. 
Both WOLP-DDPG and AQL are model-free deep RL algorithms meant to address problems with big action spaces. 
All three of them view a restless bandit problem as a Markov decision problem. 
Under this view, the number of states is exponential in $N$ and the number of possible actions is $N \choose M$, which can be as large as ${100 \choose 25}\approx 2.4\times 10^{23}$ in our setting. 
Neither REINFORCE nor WOLP-DDPG can support such a big action space, so we only evaluate them for $(N,M)=(4,1)$ and $(10,1)$.
On the other hand, QWIC and WIBQL aim to find the Whittle index through Q-learning. They are tabular RL methods and do not scale well as the state space increases.
Thus, we only evaluate QWIC and WIBQL when the size of the state space is less than one thousand.
We use open-source implementations for REINFORCE \cite{hubbs_2019} and WOLP-DDPG \cite{ddpg2019}.

In addition, we use experiment results to evaluate two important properties. First, we evaluate whether these three applications are strongly indexable. Second, we evaluate the performance of NeurWIN when the simulator does not perfectly capture the actual behavior of an arm.

We use the same neural network architecture for NeurWIN in all three applications. The neural network is a fully connected one that consists of one input layer, one output layer, and two hidden layers. There are 16 and 32 neurons in the two hidden layers. The output layer has one neuron, and the input layer size is the same as the dimension of the state of one single arm. 
As for the REINFORCE, WOLP-DDPG, AQL algorithms, we choose the neural network architectures so that the total number of parameters is slightly more than $N$ times as the number of parameters in NeurWIN to make a fair comparison. ReLU activation function is used for the two hidden layers.
An initial learning rate $L= 0.001$ is set for all cases, with the Adam optimizer \cite{kingma2015adam} employed for the gradient ascent step. The discount factor is $\beta=0.99$ with an episode horizon of $300$ timesteps. Each mini-batch consists of five episodes. More details can be found in the appendix.

\subsection{Deadline Scheduling} \label{subsec:deadline}

A recent study \cite{yu2018} proposes a deadline scheduling problem for the scheduling of electrical vehicle charging stations. In this problem, a charging station has $N$ charging spots and enough power to charge $M$ vehicles in each round. When a charging spot is available, a new vehicle may join the system and occupy the spot. Upon occupying the spot, the vehicle announces the time that it will leave the station and the amount of electricity that it needs to be charged. The charging station obtains a reward of $1-c$ for each unit of electricity that it provides to a vehicle.

However, if the station cannot fully charge the vehicle by the time it leaves, then the station needs to pay a penalty of $F(B)$, where $B$ is the amount of unfulfilled charge. The goal of the station is to maximize its net reward, defined as the difference between the amount of reward and the amount of penalty. 
In this problem, each charging spot is an arm.
\cite{yu2018} has shown that this problem is indexable. We further show in Appendix \ref{appendix:proof-deadline-strongly} that the problem is also strongly indexable.

We use exactly the same setting as in the recent study \cite{yu2018} for our experiment. In this problem, the state of an arm is denoted by a pair of integers $(D, B)$ with $B\leq 9$ and $D\leq 12$.
The size of state space is 120 for each arm.

\begin{figure*}[h]
\begin{center}
\includegraphics[width=\textwidth, height=0.15\textheight]{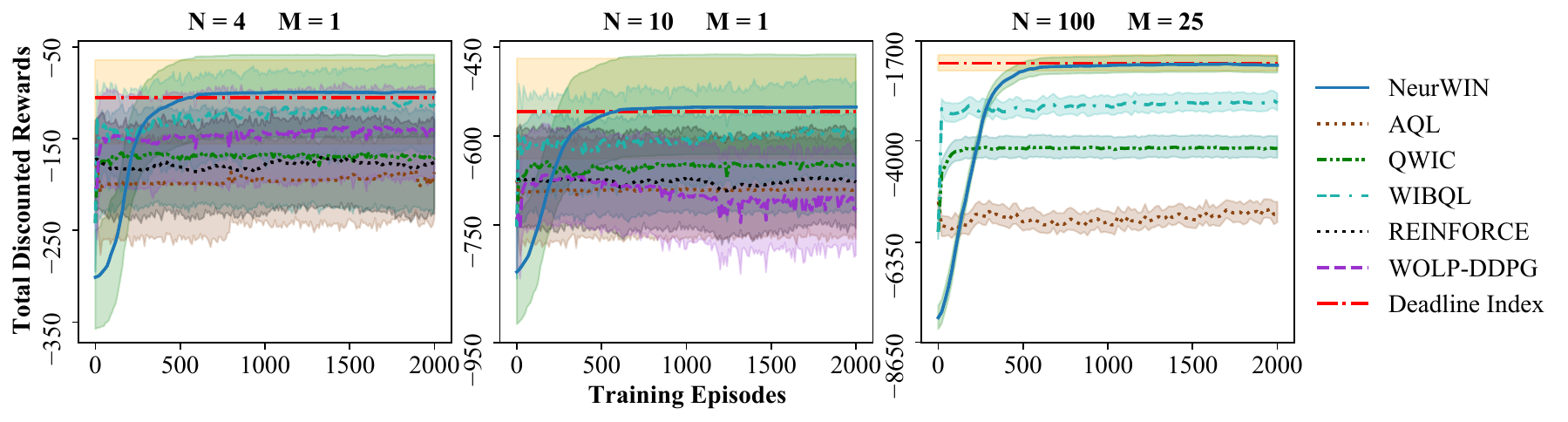} 
\end{center}
\caption{Average rewards and confidence bounds of different policies for deadline scheduling.}
\label{fig:deadline_testing_result1}
\end{figure*}

The experiment results are shown in Fig.~\ref{fig:deadline_testing_result1}. It can be observed that the performance of NeurWIN converges to that of the deadline Whittle index in 600 training episodes. In contrast, other MDP algorithms have virtually no improvement over 2,000 training episodes and remain far worse than NeurWIN. This may be due to the explosion of state space. Even when $N$ is only 4, the total number of possible states is $120^4\approx 2\times 10^8$, making it difficult for the compared deep RL algorithms to learn the optimal control in just $2,000$ episodes. QWIC performs poorly compared to NeurWIN and the deadline index, while WIBQL's performance degrades with more arms. The result suggest that both QWIC and WIBQL do not learn an accurate approximation of the Whittle index.

\subsection{Recovering Bandits} \label{subsec:recovering}

The recovering bandits \cite{pike2019recovering} aim to model the time-varying behaviors of consumers. In particular, it considers that a consumer who has just bought a certain product, say, a television, would be much less interested in advertisements of the same product in the near future. However, the consumer's interest in these advertisements may recover over time. Thus, the recovering bandit models the reward of playing an arm, i.e., displaying an advertisement, by a function $f(\min\{z, z_{max}\})$, where $z$ is the time since the arm was last played and $z_{max}$ is a constant specified by the arm.  There is no known Whittle index or optimal control policy for this problem.

The recent study \cite{pike2019recovering} considers the special case of $M=1$. When the function $f(\cdot)$ is known, it proposes a $d$-step lookahead oracle. Once every $d$ rounds, the $d$-step lookahead oracle builds a $d$-step lookahead tree. Each leaf of the $d$-step lookahead tree corresponds to a sequence of $d$ actions. The $d$-step lookahead oracle then picks the leaf with the best reward and use the corresponding actions in the next $d$ rounds. 
As the size of the tree grows exponentially with $d$, it is computationally infeasible to evaluate the $d$-step lookahead oracle when $d$ is large. We modify a heuristic introduced in \cite{pike2019recovering} to greedily construct a 20-step lookahead tree with $2^{20}$ leaves and pick the best leaf.
\cite{pike2019recovering} also proposes two online algorithms, RGP-UCB and RGP-TS, for exploring and exploiting $f(\cdot)$ when it is not known a priori. We incorporate \cite{pike2019recovering}'s open-source implementations of these two algorithms in our experiments.

In our experiment, we consider different arms have different functions $f(\cdot)$.
The state of each arm is its value of $\min\{z, z_{max}\}$ and we set $z_{max}=20$ for all arms.

\begin{figure*}[h]
\begin{center}
\includegraphics[width=\textwidth, height=0.16\textheight]{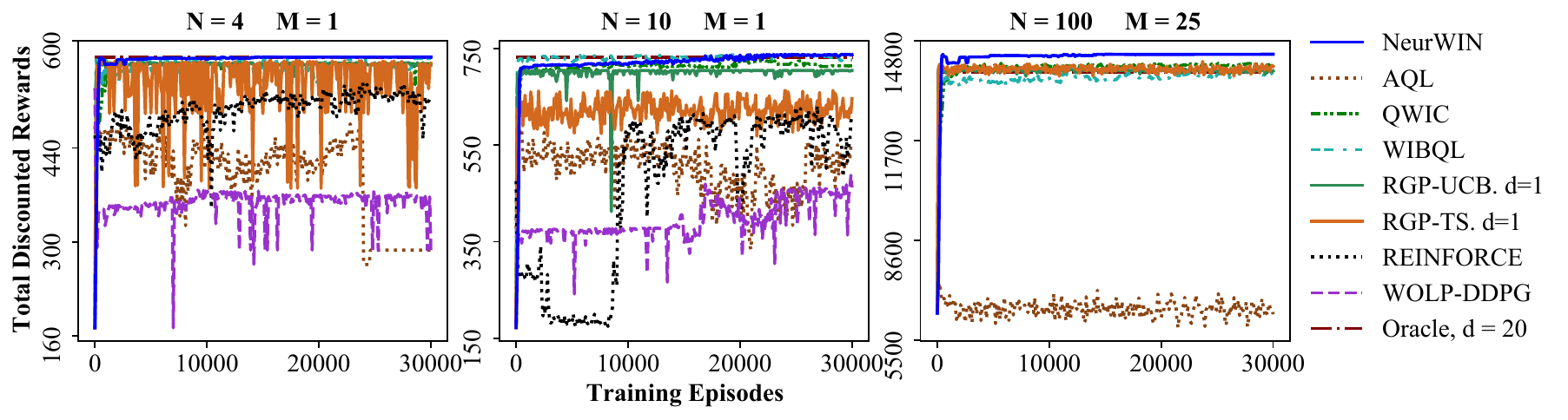} 
\end{center}
\caption{Experiment results for the recovering bandits.}
\label{fig:recovering_testing_result1}
\end{figure*}

Experiment results are shown in Fig.~\ref{fig:recovering_testing_result1}. It can be observed that NeurWIN is able to outperform the 20-step lookahead oracle in its respective setting with just a few thousands of training episodes. Other algorithms perform poorly.

\subsection{Wireless Scheduling} \label{subsec:wireless}

A recent paper \cite{aalto2015} studies the problem of wireless scheduling over fading channels. In this problem, each arm corresponds to a wireless client. Each wireless client has some data to be transmitted and it suffers from a holding cost of 1 unit per round until it has finished transmitting all its data. The channel quality of a wireless client, which determines the amount of data can be transmitted if the wireless client is scheduled, changes over time. The goal is to minimize the sum of holding costs of all wireless clients. Equivalently, we view the reward of the system as the negative of the total holding cost.
Finding the Whittle index through theoretical analysis is difficult. Even for the simplified case when the channel quality is i.i.d. over time and can only be in one of two possible states, the recent paper \cite{aalto2015} can only derive the Whittle index under some approximations. It then proposes a \emph{size-aware index} policy using its approximated index.

In the experiment, we adopt the settings of channel qualities of the recent paper. The channel of a wireless client can be in either a good state or a bad state.
Initially, the amount of load is uniform between 0 and 1Mb. The state of each arm is its channel state and the amount of remaining load. The size of state space is $2\times10^{6}$ for each arm. We consider that there are two types of arms, and different types of arms have different probabilities of being in the good state. During testing, there are $\frac{N}{2}$ arms of each type.

\begin{figure*}[h]
\begin{center}
\includegraphics[width=\textwidth, height=0.15\textheight]{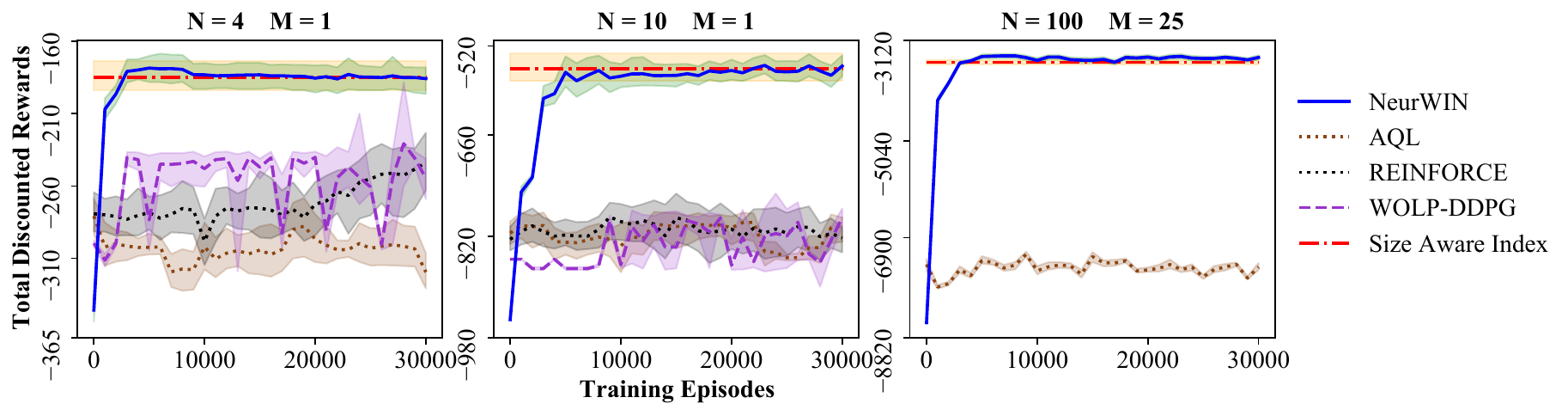} 
\end{center}
\caption{Average rewards and confidence bounds of different policies for wireless scheduling.}
\label{fig:size_aware_testing_result1}
\end{figure*}

Experiment results are shown in Fig.~\ref{fig:size_aware_testing_result1}. It can be observed that NeurWIN is able to perform as well as the size-aware index policy with about $5,000$ training episodes. 
It can also be observed that other learning algorithms perform poorly.

\subsection{Evaluation of NeurWIN's Limitations}
\label{subsec:limitations}

A limitation of NeurWIN is that it is designed for strongly indexable bandits. Hence, it is important to evaluate whether the considered bandit problem is strongly indexable. Recall that a bandit arm is strongly indexable if $D_s(\lambda)$ is strictly decreasing in $\lambda$, for all states $s$.
We extensively evaluate the function $D_s(\lambda)$ of different states for the three applications considered in this paper.
We find that all of them are strictly decreasing in $\lambda$.
 Fig.~\ref{fig:d_s_vals} shows the function $D_s(\lambda)$ of five randomly selected states for each of the three applications. The deadline and wireless scheduling cases were averaged over 50 runs. These results confirm that the three considered restless bandit problems are indeed strongly indexable.

\begin{figure*}[h]
\begin{center}
\includegraphics[width=\textwidth, height=0.18\textheight]{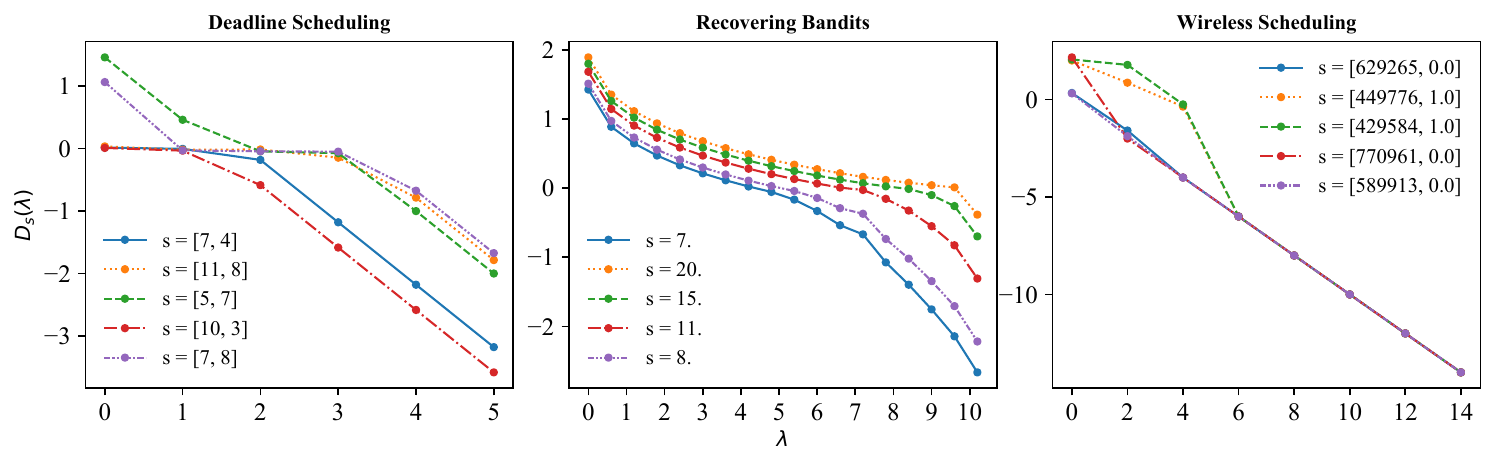} 
\end{center}
\vspace{-10pt}
\caption{Difference in discounted net reward $D_s(\lambda)$ for randomly selected initial states.}
\label{fig:d_s_vals}
\end{figure*}

Another limitation of NeurWIN is that it requires a simulator for each arm. To evaluate the robustness of NeurWIN, we test the performance of NeurWIN when the simulator is not perfectly precise. In particular, let $R_{act}(s)$ and $R_{pass}(s)$ be the rewards of an arm in state $s$ when it is activated and not activated, respectively. Then, the simulator estimates that the rewards are $R'_{act}(s)=(1+G_{act,s})R_{act}(s)$ and $R'_{pass}(s)=(1+G_{pass,s})R_{pass}(s)$, respectively, where $G_{act,s}$ and $G_{pass,s}$ are independent Gaussian random variables. The variance of these Gaussian random variables correspond to the magnitude of root mean square errors in the simulator.

\begin{figure*}[h]
\begin{center}
\includegraphics[width=\textwidth]{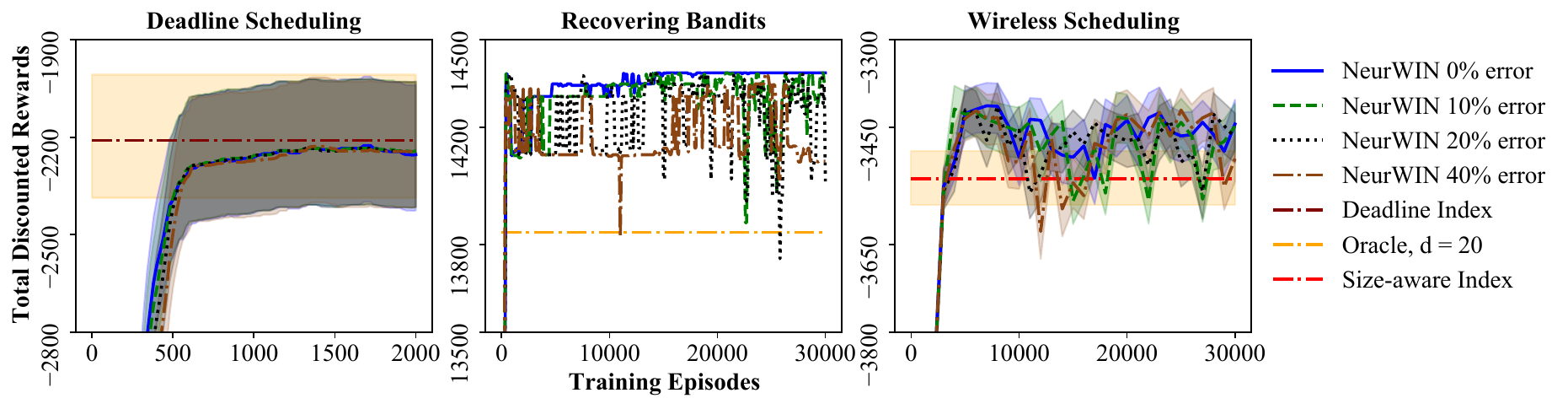} 
\end{center}
\caption{Experiment results for NeurWIN with noisy simulators.}
\label{fig:noisy_results}
\end{figure*}

We train NeurWIN using the noisy simulators with different levels of errors for the three problems. For each problem, we compare the performance of NeurWIN against the respective baseline policy. Unlike NeurWIN, the baseline policies make decisions based on the true reward functions rather than the estimated ones. The results for the case $N=100$ and $M=25$ are shown in Fig.~\ref{fig:noisy_results}. It can be observed that the performance of NeurWIN only degrades a little even when the root mean square error is as large as 40\% of the actual rewards, and its performance remains similar or even superior to that of the baseline policies.

\section{Conclusion}    \label{section:conclusion}

This paper introduced NeurWIN: a deep RL method for estimating the Whittle index for restless bandit problems. The performance of NeurWIN is evaluated by three different restless bandit problems. 
In each of them, NeurWIN outperforms or matches state-of-the-art control policies. 
The concept of strong indexability for bandit problems was also introduced. In addition, all three considered restless bandit problems were empirically shown to be strongly indexable.

There are several promising research directions to take NeurWIN into: extending NeurWIN into the offline policy case. One way is to utilize the data samples collected offline to construct a predictive model for each arm. Recent similar attempts have been made for general MDPs in \cite{NEURIPS2020_f7efa4f8, NEURIPS2020_a322852c}. NeurWIN would then learn the Whittle index based on this predictive model of a single arm, which is expected to require fewer data samples due to the decomposition provided by index policies.

Another direction is to investigate NeurWIN's performance in cases with non-strong indexability guarantees. For cases with verifiably non strongly-indexable states, one would add a pre-processing step that provides performance precision thresholds on non-indexable arms.

\section*{Acknowledgment}    \label{section:ack}

This material is based upon work supported in part by the U.S. Army Research Laboratory and the U.S. Army Research Office under Grant Numbers W911NF-18-1-0331, W911NF-19-1-0367 and W911NF-19-2-0243, in part by Office of Naval Research under Contracts N00014-18-1-2048 and N00014-21-1-2385, in part by the National Science Foundation under Grant Numbers CNS 1955696 and CPS-2038963, and in part by the Ministry of Science and Technology of Taiwan under Contract Numbers MOST 109-2636-E-009-012 and MOST 110-2628-E-A49-014.

\newpage{}
\bibliography{main}
\bibliographystyle{plain}

\newpage{} 
\appendix

\appendix

\section{Proof of Deadline Scheduling's Strong Indexability}\label{appendix:proof-deadline-strongly}

\begin{theorem} \label{theorem:deadline-strongly}
The restless bandit for the deadline scheduling problem is strongly indexable.
\end{theorem}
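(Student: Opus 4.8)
The plan is to show that $D_s(\lambda)=Q_{\lambda, act}(s)-Q_{\lambda, pass}(s)$ is continuous and piecewise linear in $\lambda$ with every linear piece having slope at most $\beta-1<0$; strong indexability (Def.~\ref{def:strong_indexability}) then follows at once. I would first recall the single‑spot MDP from \cite{yu2018}: the state is a pair $(D,B)$ (time to deadline, remaining charge); the active action gives immediate reward $1-c$ and sends $(D,B)\mapsto(D-1,(B-1)^+)$; the passive action gives reward $0$ and sends $(D,B)\mapsto(D-1,B)$; a departure penalty $F(\cdot)$ is charged when $D$ reaches $0$; and once the spot is empty a fresh vehicle arrives stochastically. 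The key structural fact is that while the spot is occupied the transition is \emph{deterministic} given the action. Hence for a state $s=(D,B)$ with $D\geq2$ and $B\geq1$ a one‑step expansion gives $D_s(\lambda)=(1-c)-\lambda+\beta\big(V_\lambda((D-1,B-1))-V_\lambda((D-1,B))\big)$, where $V_\lambda(s)=\max\{Q_{\lambda, act}(s),Q_{\lambda, pass}(s)\}$ is the optimal value of $Env(\lambda)$; for every remaining state (those with $D\leq1$, with $B=0$, or empty) the active and passive successor states coincide, so $D_s(\lambda)$ has the form $(\text{constant})-\lambda$ and is trivially strictly decreasing (the penalty $F$ only contributes a $\lambda$‑independent constant there).

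Next I would invoke the standard linearity/envelope facts for $V_\lambda$. Since $Env(\lambda)$ is a finite MDP whose return under any fixed deterministic stationary policy $\pi$ is affine in $\lambda$ with slope $-\mathbb{E}_\pi[\sum_t\beta^t a[t]\mid s]\leq0$, the map $\lambda\mapsto V_\lambda(s)$ is a maximum of finitely many affine functions, hence convex, non‑increasing, and piecewise linear with finitely many breakpoints; wherever it is differentiable, $\frac{d}{d\lambda}V_\lambda(s)=-g(s,\lambda)$, where $g(s,\lambda)$ is the expected discounted number of activations of the $\lambda$‑optimal policy started from $s$. Therefore $D_s$ is continuous and piecewise linear, and on each piece (for the nontrivial states) its slope is $-1+\beta\big(g((D-1,B),\lambda)-g((D-1,B-1),\lambda)\big)$. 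The theorem thus reduces to the \emph{key lemma}: adding one unit of charge demand raises the optimal discounted activation count by at most one, i.e. $g((d,b),\lambda)-g((d,(b-1)^+),\lambda)\leq1$ for all $d\geq1,\ b\geq1$.

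I would prove the lemma by induction on the deadline $d$. The base case $d=1$ is immediate: from $(1,b)$ the spot empties regardless of $b$, so $g((1,b),\lambda)$ equals the optimal first‑round action (in $\{0,1\}$) plus a continuation term that does not depend on $b$. For the inductive step write $g((d,b),\lambda)=a^{\star}_b+\beta\, g\big((d-1,(b-a^{\star}_b)^+),\lambda\big)$ with $a^{\star}_b\in\{0,1\}$ the optimal action, and split on $(a^{\star}_b,a^{\star}_{(b-1)^+})$: in the $(0,0)$ and $(1,1)$ cases the difference collapses to $\beta$ times an inductive‑hypothesis term, hence $\leq\beta\leq1$; in the $(1,0)$ case it equals exactly $1$; and in the $(0,1)$ case it is $-1+\beta\big(g((d-1,b),\lambda)-g((d-1,b-2),\lambda)\big)\leq -1+2\beta<1$ after applying the inductive hypothesis twice. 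Putting this together with the formula above, every linear piece of $D_s$ has slope $\leq\beta-1<0$, and continuity of $D_s$ upgrades this to strict monotonicity; since all charging spots share the same dynamics, the restless bandit is strongly indexable.

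I expect the main obstacle to be the key lemma, and within it the case where the $\lambda$‑optimal action flips between demand levels $b$ and $b-1$ (the $(0,1)$ case), together with the boundary bookkeeping where $B$ is clamped at $0$, the vehicle departs, and a new arrival occurs — one must ensure that the continuation quantities being compared are driven by the same post‑emptying randomness so that the inductive hypothesis applies verbatim. A secondary technical point is justifying the envelope identity $\frac{d}{d\lambda}V_\lambda(s)=-g(s,\lambda)$ and that non‑differentiability happens at only finitely many $\lambda$, so that "negative slope on each linear piece $+$ continuity" genuinely yields global strict monotonicity of $D_s$.
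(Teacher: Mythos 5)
Your proposal is correct in outline, but it takes a genuinely different route from the paper. The paper's proof is a sample-path coupling argument: it compares the trajectory that activates at round $1$ and then acts optimally, $L_{\lambda,act}(s)$, against the trajectory that stays passive at round $1$ and then acts optimally, $L_{\lambda,pass}(s)$, and shows that on every sample path they differ in at most two rounds. Either they agree from round $2$ onward, giving $D_s(\lambda)=1-c-\lambda+\beta^{D-1}\bigl(F(b+1)-F(b)\bigr)$, or they first differ at some round $\tau$, at which point the paper invokes the structural fact from \cite{yu2018} that the Whittle index is increasing in the remaining job size to conclude that $L_{\lambda,pass}(s)$ activates at $\tau$ while $L_{\lambda,act}(s)$ does not, after which the two trajectories coincide and $D_s(\lambda)=(1-c-\lambda)(1-\beta^{\tau-1})$; in both cases the dependence on $\lambda$ is manifestly strictly decreasing. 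You instead work through the value function: a one-step expansion, the envelope identity $\frac{d}{d\lambda}V_\lambda(s)=-g(s,\lambda)$, and a Lipschitz-in-$B$ bound on the discounted activation count proved by induction on the deadline. Your route is heavier (it needs the Danskin-type derivative, the finiteness of breakpoints, and a four-way case analysis in the induction), but it is more self-contained: in particular your $(0,1)$ case handles directly the possibility that the optimal action is non-monotone in $B$, which the paper excludes only by citing the index-monotonicity result of \cite{yu2018}. The paper's argument is shorter and more transparent about \emph{why} the slope is at least $-1+\beta\cdot 1$ in magnitude (the two trajectories can disagree on the activation decision at most once more after round $1$), which is exactly the content of your key lemma in disguise. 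The technical caveats you flag — tie-breaking at the finitely many breakpoints of $V_\lambda$, and the boundary bookkeeping when $B$ is clamped at $0$ or the spot empties — are real but resolvable, and the paper's coupling formulation quietly sidesteps them by fixing the randomness of arrivals on each sample path.
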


\begin{proof}
Fix a state $s=(D,B)$, the function $D_s(\lambda):=(Q_{\lambda,act}(s)-Q_{\lambda,pass}(s))$ is a continuous and piece-wise linear function since the number of states is finite. Thus, it is sufficient to prove that $D_s(\lambda)$ is strictly decreasing at all points of $\lambda$ where $D_s(\lambda)$ is differentiable. Let $L_{\lambda,act}(s)$ be the sequence of actions taken by a policy that activates the arm at round 1, and then uses the optimal policy starting from round 2. Let $L_{\lambda,pass}(s)$ be the sequence of actions taken by a policy that does not activate the arm at round 1, and then uses the optimal policy starting from round 2. We prove this theorem by comparing $L_{\lambda,act}(s)$ and $L_{\lambda,pass}(s)$ on every sample path. We consider the following two scenarios:

In the first scenario, $L_{\lambda,act}(s)$ and $L_{\lambda,pass}(s)$ are the same starting from round 2. Let $b$ be the remaining job size when the current deadline expires under $L_{\lambda,act}(s)$. Since $L_{\lambda,pass}(s)$ is the same as $L_{\lambda,act}(s)$ starting from round 2, its remaining job size when the current deadline expires is $b+1$. Thus, $D_s(\lambda) = 1-c-\lambda+\beta^{D-1}(F(b+1)-F(b))$, which is strictly decreasing in $\lambda$ whenever $D_s(\lambda)$ is differentiable.

In the second scenario, $L_{\lambda,act}(s)$ and $L_{\lambda,pass}(s)$ are not the same after round 2. Let $\tau$ be the first time after round 2 that they are different. Since they are the same between round 2 and round $\tau$, the remaining job size under $L_{\lambda,act}(s)$ is no larger than that under $L_{\lambda,pass}(s)$. Moreover, by \cite{yu2018}, the Whittle index is increasing in job size. Hence, we can conclude that, on round $\tau$, $L_{\lambda,pass}(s)$ activates the arm and $L_{\lambda,act}(s)$ does not activate the arm. After round $\tau$, $L_{\lambda,act}(s)$ and $L_{\lambda,pass}(s)$ are in the same state and will choose the same actions for all following rounds. Thus, the two sequences only see different rewards on round 1 and round $\tau$, and we have $D_s(\lambda)=(1-c-\lambda)(1-\beta^{\tau-1})$, which is strictly decreasing in $\lambda$ whenever $D_s(\lambda)$ is differentiable.

Combining the two scenarios, the proof is complete.
\end{proof}

\section{Additional NeurWIN Results For Neural Networks With Different Number of Parameters} \label{appendix:smaller_larger_nn}

In this section, we show the total discounted rewards' performance for NeurWIN when trained on different-sized neural networks. We compare the performance against each case's proposed baseline in order to observe the convergence rate of NeurWIN. 
We use the same training parameters as in section \ref{section:experiments}, and plot the results for $(4,1), (10,1), (100, 25)$.
Fig.~\ref{fig:larger_nn} shows the performance for a larger neural network per arm with $\{48, 64\}$ neurons in 2 hidden layers. 
Fig.~\ref{fig:smaller_nn} provides results for a smaller neural network per arm with $\{8, 14\}$ neurons in 2 hidden layers.
For the deadline and wireless scheduling cases, the larger neural network has 3345 parameters per arm compared with 625 parameters for each network from section \ref{section:experiments}, and 165 parameters for the smaller neural network.
A recovering bandits' network has a larger network parameters' count of 3297 parameters compared with 609 parameters for results in section \ref{section:experiments}, and 157 parameters for the smaller neural network.

\begin{figure*}[h]
\begin{center}
\includegraphics[width=\textwidth, height=0.18\textheight]{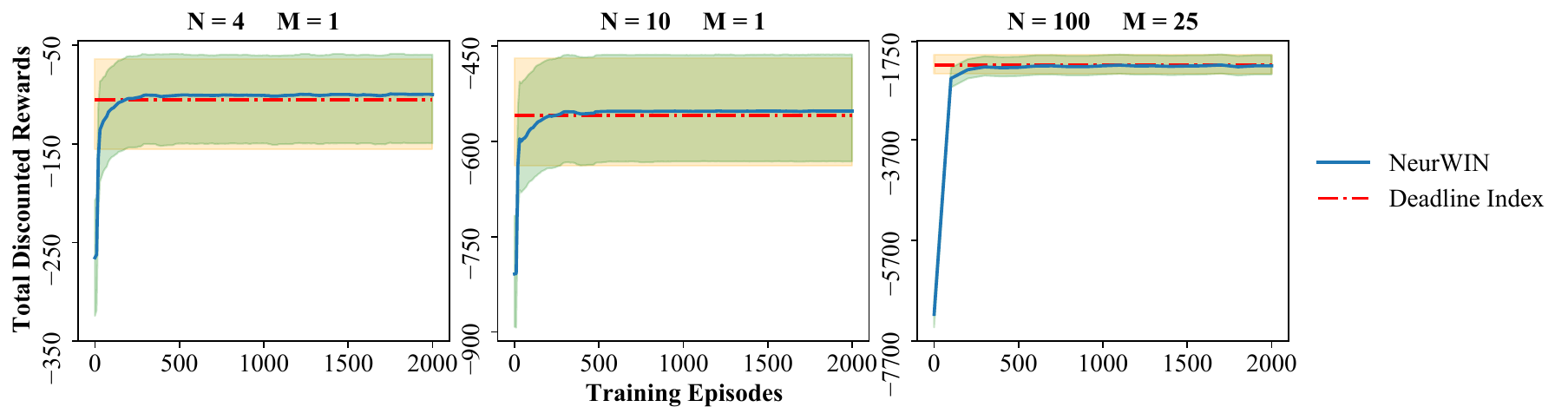} 
\end{center}

\begin{center}
\includegraphics[width=\textwidth,height=0.18\textheight]{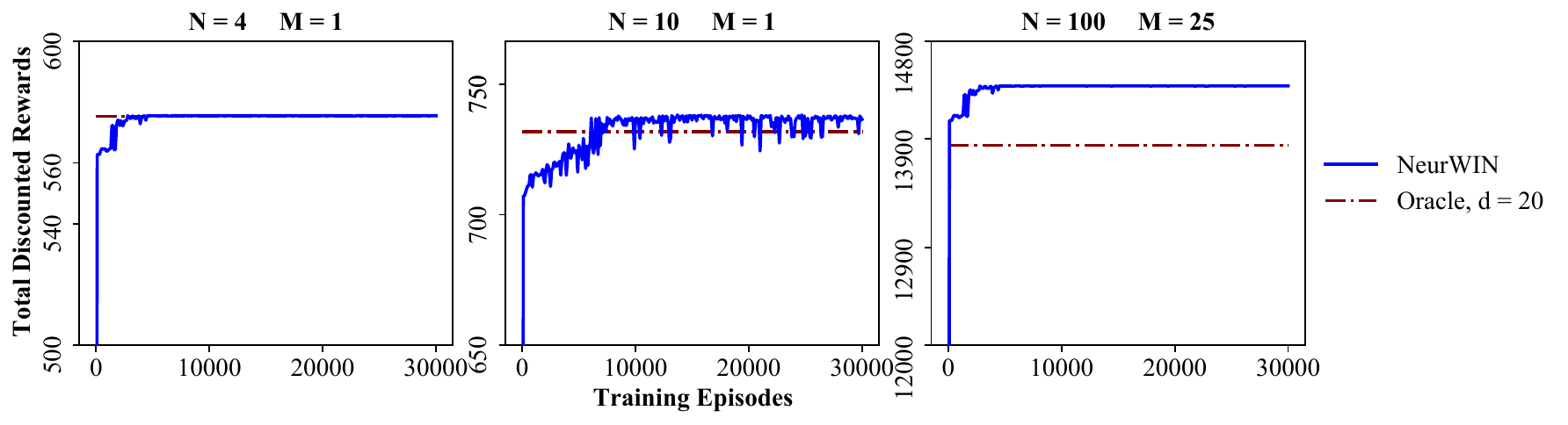}
\end{center}

\begin{center}
\includegraphics[width=\textwidth,height=0.18\textheight]{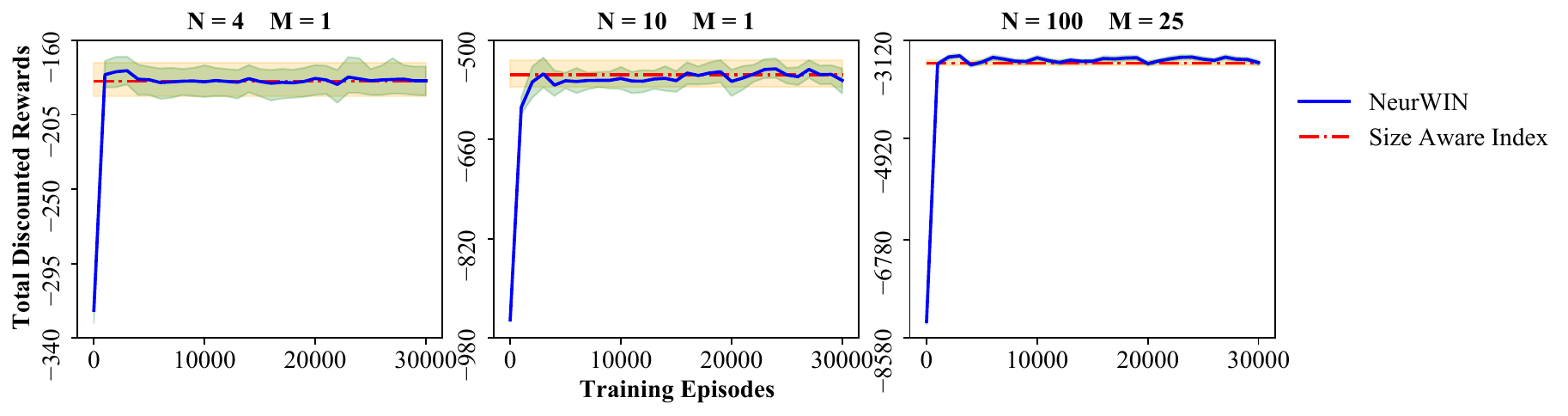}
\end{center}

\caption{NeurWIN performance results when training a larger network with 3345 parameters per arm for the deadline and wireless scheduling cases, and 3297 parameters for the recovering bandits' case.}
\label{fig:larger_nn}
\end{figure*}

It can be observed that a neural network with more parameters is able to converge to the baseline with fewer episodes compared with the smaller and original neural networks.
In the case of deadline scheduling, the smaller network requires 1380 episodes in $(4,1)$ to reach the Whittle index performance. 
The larger network, in contrast, reaches the Whittle index performance in terms of total discounted rewards with considerably fewer episodes at 180 episodes.
The original network used in section \ref{section:experiments} converges in approximately 600 episodes.
The same observation is true for the wireless scheduling case, with the smaller network requiring 10,000 episodes for $(4,1)$, and fails to outperform the baseline for $(10,1)$.The larger network converges in fewer episodes compared to the smaller and original networks.

More interestingly, the smaller neural networks fail to outperform the baselines in the recovering bandits case for $(4,1)$ and $(10,1)$, which suggests that the selected neural network architecture is not rich-enough for learning the Whittle index.

\begin{figure*}[h] 

\begin{center}
\includegraphics[width=\textwidth, height=0.18\textheight]{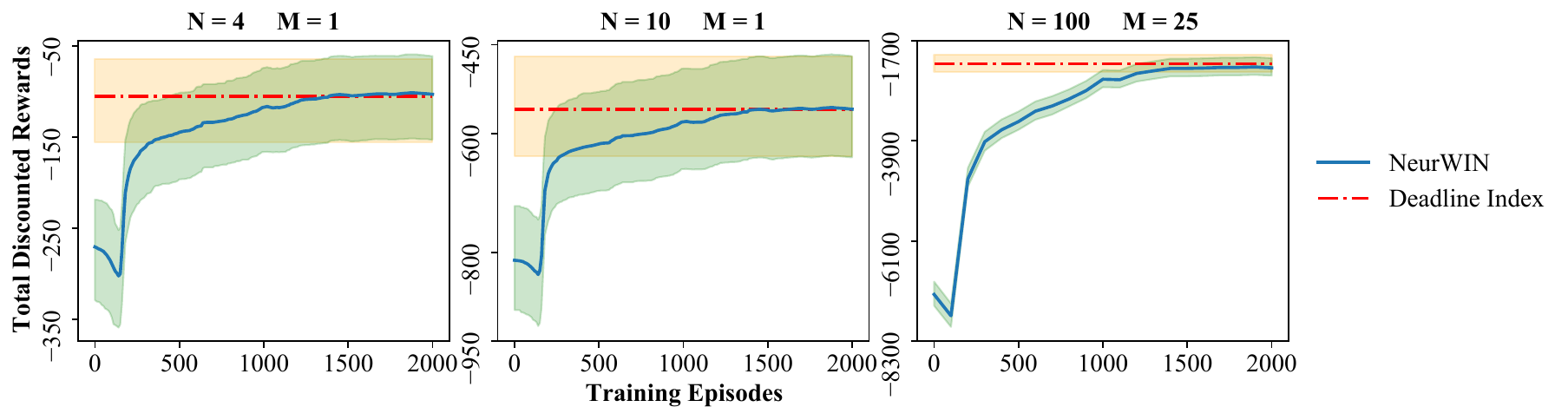} 
\end{center}

\begin{center}
\includegraphics[width=\textwidth,height=0.18\textheight]{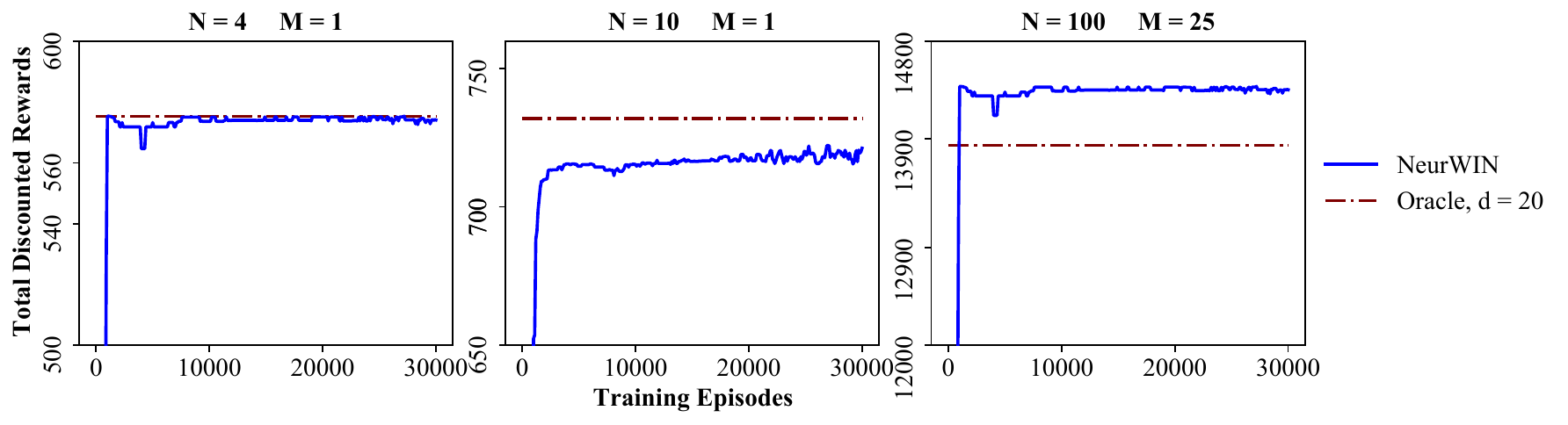}
\end{center}

\begin{center}
\includegraphics[width=\textwidth,height=0.18\textheight]{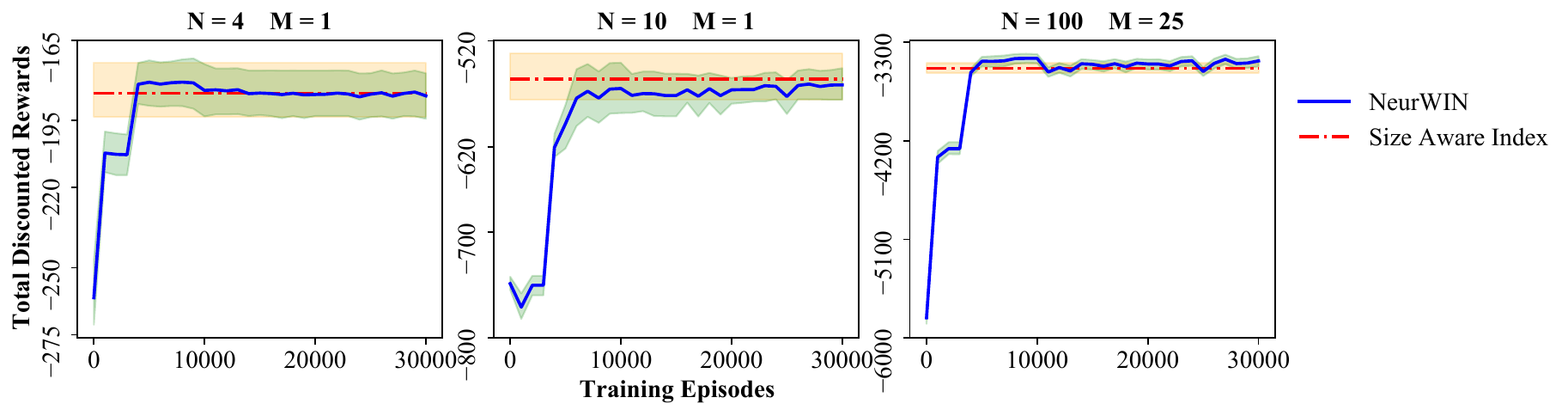}
\end{center}

\caption{NeurWIN performance results for a smaller network size with 165 parameters per arm for the deadline and wireless scheduling cases, and 157 parameters for the recovering bandits' case.}
\label{fig:smaller_nn}
\end{figure*}

\newpage{}

\section{Training and Testing Environments} \label{appendix:compute}

For all cases, we implement NeurWIN algorithm using PyTorch, and train the agent on a single arm modelled after OpenAI's Gym API.

All algorithms were trained and tested on a Windows 10 build 19043.985 machine with an AMD Ryzen 3950X CPU, and 64 GB 3600 MHz RAM.

\section{Deadline Scheduling Training and Inference Details} \label{appendix:deadline}

\subsection{Formulated Restless Bandit for the Deadline Scheduling Case} \label{appendix:deadline_description}
The state $s[t]$, action $a[t]$, reward $r[t]$, and next state $s[t+1]$ of one arm are listed below:

\textbf{State $s[t]$:} The state is a vector $(D,B)$. $B$ denotes the job size (i.e. amount of electricity needed for an electric vehicle), and $D$ is the job's time until the hard drop deadline $d$ is reached (i.e. time until an electric vehicle leaves).

\textbf{Action $a[t]$:} The agent can either activate the arm $a[t] = 1$, or leave it passive $a[t]=0$. The next state changes based on two different transition kernels depending on the selected action. The reward is also dependent on the action at time $t$.

\textbf{Reward $r[t]$:} The agent, at time $t$, receives a reward $r[t]$ from the arm,
\begin{equation}
    r[t]  = \begin{cases}
 (1 - c)a[t] \hspace{1cm} \text{if } B[t] > 0, D[t] > 1, \\ 
 (1 - c)a[t]  - F(B[t] - a[t]) \hfill \text{if } B[t] > 0, D[t] = 1,\\
 0 \hfill \text{otherwise.}
\end{cases}
\end{equation}

In the equation above, $c$ is a constant processing cost incurred when activating the arm, $F(B[t] - a[t])$ is the \emph{penalty function} for failing to complete the job before $D=1$.  The penalty function was chosen to be $F(B[t] - a[t]) = 0.2(B[t] - a[t])^2$.

\textbf{Next state $s[t+1]$:} The next state $s[t+1]$ is given by
\begin{equation}
    s[t+1] = 
    \begin{cases}
    (D[t] - 1, B[t] - a[t])   \hspace{0.5cm} \text{if } D[t] > 1, \\
    (D,B) \text{ w.p. } Q(D,B) \hfill \text{if } D[t] \leq 1,  \\
    \end{cases}
\end{equation}
where $Q(D,B)$ is the arrival probability of a new job (i.e. a new electric vehicle arriving at a charging station) if the position is empty. 
For training and inference, we set $Q(0,0) = 0.3$ and $Q(D,B) = 0.7/119$ for all $D>0$, $B>0$.

\subsection{Training Setting} \label{subsec:training_deadline}

NeurWIN training is made for $2000$ episodes on the deadline scheduling case. 
We save the trained model parameters at an interval of $10$ episodes for inferring the control policy after training.
The training produces $200$ different set of parameters that output the estimated index given their respective training limit.
The neural network had $625$ trainable parameters given as layers $\{2,16,32,1\}$, where the input layer matches the state size. 

For the deadline scheduling training, we set the sigmoid value $m = 1$, episode's time horizon $T = 300$ timesteps, mini-batch size to $5$ episodes, and the discount factor $\beta = 0.99$.
The processing cost is set to $c = 0.5$.
Training procedure follows section \ref{subsec:training_neurwin} from the main text. 
The arm randomly picks an initial state $s[t=0] = (D,B)$, with a maximum $\bar{D} = 12$, and maximum $\bar{B} = 9$.
The initial states are the same across episodes in one mini-batch for return comparison.
The sequence of job arrivals in an episode's horizon is also fixed across a mini-batch.
This way, the mini-batch returns are compared for one initial state, and used in tuning the estimated index value $f_{\theta}(\cdot)$.

At the agent side, NeurWIN receives the initial state $s[t=0]$, sets the activation cost from a random state $\lambda = f_{\theta}(s_0)$.
Training follows as described in NeurWIN's pseudo code.

For the MDP algorithms (REINFORCE, AQL, WOLP-DDPG), the training hyperparameters are the same as NeurWIN: Initial learning rate is $L[t=0] = 0.001$, episode time horizon $T=300$ timesteps, discount factor $\beta = 0.99$.
Neural networks had two hidden layers with total parameters' count slightly larger than $N$ number of NeurWIN networks for proper comparison. 

QWIC was trained for the sets $4 \choose 1$ $10 \choose 1$ $100 \choose 25$ with a $Q$-table for each arm.
QWIC selects from a set of candidate threshold values $\lambda \in \Lambda$ as index for each state. 
The algorithm learns $Q$ function $Q \in \mathbb{R}^{\Lambda \times \mathcal{S} \times \{0,1\}}$.
The estimated index $\tilde{\lambda}[s]$ per state $s$ is determined during training as,
\begin{equation}
    \tilde{\lambda}[s] = \argmin_{\lambda \in \Lambda} |Q(\lambda, s, 1) - Q(\lambda, s, 0)|
\end{equation}
Initial timestep $\epsilon$ was selected to be $\epsilon_{max} = 1$, and at later timesteps set to be $\epsilon = \min(1, 2t^{-0.5})$.
Other training hyperparameters: Initial learning rate $L[t=0]=0.001$, training episode time horizon of $T=300$ timesteps, discount factor $\beta=0.99$.

\subsection{Inference Setting} \label{subsec:testing_deadline}
In inferring the control policy, we test the trained parameters at different episode intervals.
In other words, the trained models' parameters are tested at an interval of episodes, and their discounted rewards are plotted for comparison. 

From the trained NeurWIN models described in \ref{subsec:training_deadline}, we instantiate $N$ arms, and activate $M$ arms at each timestep based on their indices. 
For example, we load a NeurWIN model trained for $100$ episodes on one arm, and set $N$ arms each with its own trained agent on $100$ episodes. 
Once the testing is complete, we load the next model trained at $110$ episodes, and repeat the process for $110$ episodes.
The testing setting has the same parameters as the training setting with horizon $T = 300$ timesteps, and discount factor $\beta = 0.99$.

For the deadline Whittle index, we calculate the indices using the closed-form Whittle index and activate the highest $M$ indices-associated arms. 
The accumulated reward from all arm (activated and passive) is then discounted with $\beta$.

For the MDP algorithms (REINFORCE, AQL, WOLP-DDPG), $N$ arms combined states form the MDP state which is passed to the trained neural network. 
Reward is the sum of all rewards from the $N$ arms.
Testing is made for neural networks trained at different episode limits. 
The same testing parameters as NeurWIN were chosen: horizon $T = 300$ timestep, discount factor $\beta = 0.99$.

We perform the testing over $50$ independent runs up to $2000$ episodes, where each run the arms are seeded differently.
All algorithms were tested on the same seeded arms. 
Results were provided in the main text for this setting.

\section{Recovering Bandits' Training and Inference Details} \label{appendix:recovering}

\subsection{Formulated Restless Bandit for the Recovering Bandits' Case}

We list here the terms that describes one restless arm in the recovering bandits' case:

\textbf{State $s[t]$:} The state is a single value $s[t] = z[t]$ called the waiting time.
The waiting time $z[t]$ indicates the time since the arm was last played.
The arm state space is determined by the maximum allowed waiting time $z_{max}$, giving a state space $\mathcal{S} := [1, z_{max}]$.

\textbf{Action $a[t]$:} As with all other considered cases, the agent can either activate the arm $a[t]=1$, or not select it $a[t]=0$.
The action space is then $\mathcal{A} := \{0,1\}$.

\textbf{Reward $r[t]$:} The reward is provided by the recovering function $f(z[t])$, where $z[t]$ is the time since the arm was last played at time $t$.
If the arm is activated, the function value at $z[t]$ is the earned reward.
A reward of zero is given if the arm is left passive $a[t]=0$.
Fig.~\ref{fig:recovering_functions} shows the four recovering functions used in this work. 
The recovering functions are generated from,
\begin{equation}
    f(z[t]) = \theta_0(1 - e^{-\theta_1\cdot z[t]})
\end{equation}

Where the $\Theta = [\theta_0,\theta_1]$ values specify the recovering function.
The $\Theta$ values for each class are provided in table \ref{table:theta_recovering}.

\begin{table}[t]
\caption{$\Theta$ values used in the recovering bandits' case.}
\label{table:theta_recovering}
\vskip 0.15in
\begin{center}
\begin{small}
\begin{sc}
\begin{tabular}{lcr}
\toprule
Class&$\theta_0$ Value&$\theta_1$ Value \\
\midrule
		A   &  10  & 0.2  \\ 
		B   & 8.5  & 0.4  \\ 
		C   &  7 & 0.6    \\
		D   &  5.5 & 0.8  \\ 
\bottomrule
\end{tabular}
\end{sc}
\end{small}
\end{center}
\vskip -0.1in
\end{table}

\textbf{Next state $s[t+1]$:} The state evolves based on the selected action. 
If $a[t]=1$, the state is reset to $s[t+1]=1$, meaning that bandit's reward decayed to the initial waiting time $z[t+1]=1$.
If the arm is left passive $a[t]=0$, the next state becomes $s[t+1] = \min\{z[t]+1, z_{max}\}$.

\begin{figure}[ht]
\begin{center}
\includegraphics[width=0.75\textwidth]{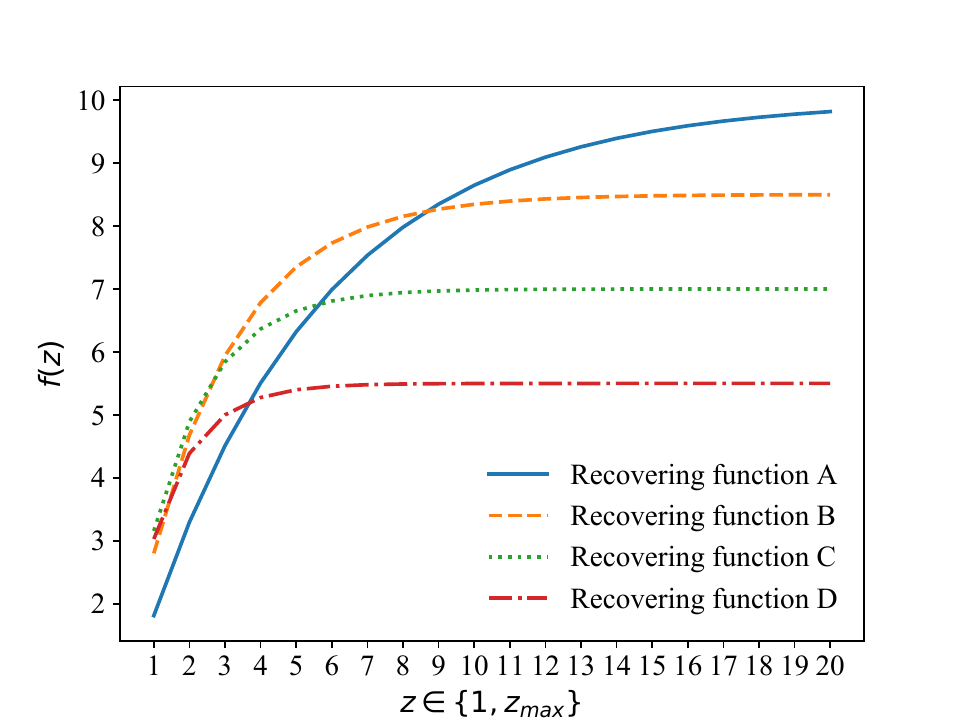} 
\end{center}
\caption{The selected recovering functions $f(z)$ for the recovering bandits' case.}
\label{fig:recovering_functions}
\end{figure}

\subsection{Training Setting}
 Training procedure for NeurWIN algorithm follows the pseudocode in section \ref{section:neurwin}.
Here we discuss the parameter selection and details specific to the recovering bandits' case.
We train the neural network using NeurWIN for $30,000$ episode, and save the trained parameters at an episode interval of $100$ episodes.
In total, for $30,000$ training episodes, we end up with $300$ models for inference. 
The selected neural network has $609$ trainable parameters with two hidden layers given as $\{1,16,32,1\}$ neurons.

For training parameters, we select the sigmoid value $m = 5$, the episode's time horizon $T = 300$ timesteps, the mini-batch size to $5$ episodes, and the discount factor $\beta=0.99$.
As with all other cases, each mini-batch of episodes has the same initial state $s[t=0]$ which is provided by the arm.
To ensure the agent experiences as many states in $[1, z_{max}]$ as possible, we set an initial state sampling distribution given as $ Pr\{s[t=0] = z\} = \frac{2^z}{2^1 + 2^2 + \hdots + 2^{z_{max}}}$.
Hence, the probability of selecting the initial state to be $s[t=0] = z_{max}$ is $0.5$.

At the agent side, we set the activation cost $\lambda$ at the beginning of each mini-batch.
$\lambda$ is chosen to be the estimate index value $f_{\theta}(s_1)$ of a randomly selected state in $s_1 \in [1,z_{max}]$.
The training continues as described in NeurWIN's pseudo code: the agent receives the state, and selects an action $a[t]$.
If the agent activates the arm $a[t]=1$, it receives a reward equal to the recovery function's value at $z$, and subtracts $\lambda$ from it.
Otherwise, the reward $r[t]$ is kept the same for $a[t]=0$.
We train NeurWIN independently for each of the four activation functions described in table \ref{table:theta_recovering}.

For the MDP algorithms, training hyperparameters were selected as: Initial learning rate $L[t=0] = 0.001$, discount factor $\beta=0.99$. The algorithms are trained on the MDP representation, where the state is the combined states of the $N$ arms.

QWIC training hyperparameters are the same. Training process is the same as in the deadline scheduling case from \ref{subsec:training_deadline}.

\subsection{Inference Setting} \label{subsec:testing_recovering}

The inference setup measures NeurWIN's control policy for several $N \choose M$ settings.
We test, for a single run, the control policy of NeurWIN over a time horizon $T=300$ timesteps.
We set $N$ arms such that a quarter have one recovering function class from table \ref{table:theta_recovering}.
For $10 \choose 1$, we have three type A, three type B, two type C, and two type D arms.

At each timestep, the 8-lookahead and 20-lookahead policies rank the recovering functions reward values, and select the $M$ arms with the highest reward values for activation.
The incurred discounted reward at time $t$ is the discounted sum of all activated arms' rewards.
The total discounted reward is then the discounted rewards over time horizon $T = 300$.
For inferring NeurWIN's control policy, we record the total discounted reward for each of the $300$ models.
For example, we instantiate $N$ arms each having a neural network trained to $10,000$ episodes. 
At each timestep $t$, the neural networks provide the estimated index $f_{i,\theta}(s_i[t])$ for $i = 1, 2, \hdots, N$.
The control policy activates the $M$ arms with the highest index values. 
We then load the model parameters trained on $10,100$ episodes, and repeat the aforementioned testing process using the same seed values.

With REINFORCE, AQL, WOLP-DDPG, testing happens for the same seeded arms as NeurWIN and d-lookahead policies. 
The MDP state is the combined states of $N$ arms, and the total reward is the sum of all arm rewards. 
Testing is done for the 300 saved models, with discount factor $\beta = 0.99$ over horizon $T=300$.
QWIC and WIBQL are also tested for the $300$ saved index mappings and $Q$-tables.

\section{Wireless Scheduling Training and Inference Details} \label{appendix:size_aware}

\subsection{Restless Arm Definition for the Wireless Scheduling Case}

Here we list the state $s[t]$, action $a[t]$, reward $r[t]$, and next state $s[t+1]$ that forms one restless arm:

\textbf{State $s[t]$:} The state is a vector $(y[t], v[t])$, where $y[t]$ is the arm's remaining load in bits, and $v[t]$ is the wireless channel's state indicator. 
$v[t] = 1$ means a good channel state and a higher transmission rate $r_2$, while $v[t] = 0$ is a bad channel state with a lower transmission rate $r_1$.
\\

\textbf{Action $a[t]$:} The agent either activates the arm $a[t]=1$, or keeps it passive $a[t]=0$. 
The reward and next state depend on the chosen action.
\\

\textbf{Reward $r[t]$:} The arm's reward is the negative of the \emph{holding cost} $\psi$, which is a cost incurred at each timestep for not completing the job.
\\

\textbf{Next state $s[t+1]$:} Next state is $[y[t+1], v[t+1]]$. Remaining load $y[t+1]$ equals $y[t]-a[t]r_d$, where $d=2$ if $v[t] = 1$, and $d=1$ if $v[t]=0$. $v[t+1]$ is 1 with probability $q$, and 0, otherwise, where $q$ is a parameter describing the probability that the channel is in a good state.

\subsection{Training Setting} \label{appendix:wirelss_training}

The neural network has $625$ trainable parameters given as layers $\{2,16,32,1\}$ neurons. 
The training happens for $30,000$ episodes, and we save the model parameters at each $1000$ episodes.
Hence, the training results in $30$ models trained up to different episode limit.

For the wireless scheduling case, we set the sigmoid value $m=0.75$, mini-batch size to $5$ episodes, and the discount factor to $\beta=0.99$.
Maximum episode time horizon is set to $T = 300$ timesteps.
The holding cost is set to $c=1$, which is incurred for each timestep the job is not completed. 
We also set the good transmission rate $r_2 = 33.6$ kb, and the bad channel transmission rate $r_1=8.4$ kb.
During training, we train NeurWIN on two different good channel probabilities, $q=75\%$, and $q=10\%$.

The episode defines one job size sampled uniformly from the range $y[t=1] \sim (0,1 \text{ Mb}]$.
All episodes in one mini-batch have the same initial state, as well as the same sequence of channel states $[v[t=0],v[t=1],\hdots,v[t=T-1]]$.

At the agent side, NeurWIN receives the initial state $s[t=0]$, and sets the activation cost from a random state $\lambda = f_{\theta}(s_1)$ for all timesteps of all mini-batch episodes.
As mentioned before, we save the trained model at an interval of $1000$ episodes. 
For $30,000$ training episodes, this results in $30$ models trained up to their respective episode limit.

\subsection{Inference Setting} \label{subsec:testing_wireless}

Testing compares the induced control policy for NeurWIN with the size-aware index and learning algorithms. 
The algorithms' control policies are tested at different training episodes' limits.
We instantiate $N$ arms and activate $M$ arms at each timestep $t$ until all users' jobs terminate, or the time limit $T=300$ is reached.

Half of the arms have a good channel probability $75\%$.
The other half has a good channel probability $10\%$.

The size-aware index is defined as follows: at each timestep, the policy prioritizes arms in the good channel state, and calculates their \emph{secondary index}.
The secondary index $\hat{v}_i$ of arm $i$ state $(y_i[t], v_i[t])$ is defined as, 
\begin{equation}
\hat{v}_i(y_i[t], v_i[t]) = \frac{c_{i} r_{i,2}}{y_i[t]}
\end{equation}

The size-aware policy then activates the highest $M$ indexed arms. 
In case the number of good channel arms is below $M$, the policy also calculate the \emph{primary index} of all remaining arms. The primary index $v_i$ of arm $i$ state $(y_i[t], v_i[t])$ is defined as,
\begin{equation}
v_i(y_i[t], v_i[t]) = \frac{c_i}{q_i[t](r_{i,2}/r_{i,1} -1)}
\end{equation}

Rewards received from all arms are summed, and discounted using $\beta=0.99$.
The inference phase proceeds until all jobs have been completed. 

For NeurWIN's control policy, we record the total discounted reward for the offline-trained models.
For example, we set $N$ arms each coupled with a model trained on $10,000$ episodes. 
The models output their arms' indices, and the top $M$ indexed arms are activated.
In case the remaining arms are less than $M$, we activate all remaining arms at timestep $t$.
timestep reward $\beta^t R[t] = \beta^t \sum_{i=1}^N r_i[t]$ is the sum of all arms' rewards. 
Once testing for the current model is finished, we load the next model $11,000$ for each arm, and repeat the process.
For the MDP algorithms, the MDP state is the combined states of all $N$ arms, with the reward being the sum of arms' rewards. 

We note that the arms' initial loads are the same across runs, and that the sequence of good channel states is random.
For all algorithms, we average the total discounted reward for all control policies over $50$ independent runs using the same seed values.

\end{document}